\newtheorem{theorem}{Theorem}
\newtheorem{lemma}{Lemma}
\newtheorem{proposition}{Proposition}
\newcommand{\ouralgo}{\textsc{SVRG OL\ }}
\newcommand{\R}{\mathbb{R}}
\DeclareMathOperator*{\argmin}{argmin}
\DeclareMathOperator*{\E}{\mathbb{E}}
\newcommand{\D}{\mathcal{D}}
\newcommand{\FreeRex}{FreeRex}
\newcommand{\genericalgo}{\textsc{SVRG OL}}
\newcommand{\x}{w_\star}
\title{Distributed Stochastic Optimization via Adaptive SGD}
\author{
  Ashok Cutkosky\\
  Stanford University, USA\thanks{now at Google}\\
  \texttt{cutkosky@google.com}
  \And
  R\'obert Busa-Fekete\\
  Yahoo! Research, New York, USA\\
  \texttt{busafekete@oath.com}
}
\begin{document}

\maketitle
\begin{abstract}
Stochastic convex optimization algorithms are the most popular way to train machine learning models on large-scale data. Scaling up the training process of these models is crucial, but the most popular algorithm, Stochastic Gradient Descent (SGD), is a serial method that is surprisingly hard to parallelize. In this paper, we propose an efficient distributed stochastic optimization method by combining adaptivity with variance reduction techniques. Our analysis yields a linear speedup in the number of machines, constant memory footprint, and only a logarithmic number of communication rounds. Critically, our approach is a black-box reduction that parallelizes any serial online learning algorithm, streamlining prior analysis and allowing us to leverage the significant progress that has been made in designing adaptive algorithms. In particular, we achieve optimal convergence rates without any prior knowledge of smoothness parameters, yielding a more robust algorithm that reduces the need for hyperparameter tuning. We implement our algorithm in the Spark distributed framework and exhibit dramatic performance gains on large-scale logistic regression problems.
\end{abstract}

\section{Setup}
We consider a fundamental problem in machine learning, stochastic convex optimization:
\begin{align}
\min_{w\in W} F(w) := \E_{f\sim \D}[f(w)]\label{eqn:objective}
\end{align}
Here, $W$ is a convex subset of $\R^d$ and $\D$ is a distribution over $L$-smooth convex functions $W\to \R$. We do not have direct access to $F$, and the distribution $\D$ is unknown, but we do have the ability to generate i.i.d. samples $f\sim \D$ through some kind of stream or oracle. In practice, each function $f\sim \D$ corresponds to a new datapoint in some learning problem. Algorithms for this problem are widely applicable: for example, in logistic regression the goal is to optimize $F(w)=\E[f(w)] = \E[\log( 1 + \exp ( - y w^T x ) )] $ when the $(x,y)$ pairs are the (feature vector, label) pairs coming from a fixed data distribution. Given a budget of $N$ oracle calls (e.g. a dataset of size $N$), we wish to find a $\hat w$ such that $F(\hat w)-F(w_\star)$ (called the \emph{suboptimality}) is as small as possible as fast as possible using as little memory as possible, where $w_\star \in \argmin F$. 

The most popular algorithm for solving (\ref{eqn:objective}) is Stochastic Gradient Descent (SGD), which achieves statistically optimal $O(1/\sqrt{N})$ suboptimality in $O(N)$ time and constant memory. However, in modern large-scale machine learning problems the number of data points $N$ is often gigantic, and so even the linear time-complexity of SGD becomes onerous. We need a \emph{parallel} algorithm that runs in only $O(N/m)$ time using $m$ machines. We address this problem in this paper, evaluating solutions on three metrics: time complexity, space complexity, and communication complexity. Time complexity is the total time taken to process the data points. Space complexity is the amount of space required per machine. Note that in our streaming model, an algorithm that keeps only the most recently seen data point in memory is considered to run in constant memory. Communication complexity is measured in terms of the number of ``rounds'' of communication in which all the machines synchronize. In measuring these quantities we often suppress all constants other than those depending on $N$ and $m$ and all logarithmic factors.

In this paper we achieve the ideal parallelization complexity (up to a logarithmic factor) of $\tilde O(N/m)$ time, $O(1)$ space and $\tilde O(1)$ rounds of communication, so long as $m<\sqrt{N}$. Further, in contrast to much prior work, our algorithm is a \emph{reduction} that enables us to generically parallelize any serial online learning algorithm that obtains a sufficiently adaptive convergence guarantee (e.g. \citep{duchi2011adaptive, orabona2014simultaneous, cutkosky2017online}) in a black-box way. This significantly simplifies our analysis by decoupling the learning rates or other internal variables of the serial algorithm from the parallelization procedure. This technique allows our algorithm to adapt to an unknown smoothness parameter $L$ in the problem, resulting in optimal convergence guarantees without requiring tuning of learning rates. This is an important aspect of the algorithm: even prior analyses that meet the same time, space and communication costs~\citep{FrostigGKS15, lei2016less, lei2017non} require the user to input the smoothness parameter to tune a learning rate. Incorrect values for this parameter can result in failure to converge, not just slower convergence. In contrast, our algorithm automatically adapts to the true value of $L$ with no tuning. Empirically, we find that the parallelized implementation of a serial algorithm matches the performance of the serial algorithm in terms of sample-complexity, while bestowing significant runtime savings.

\section{Prior Work}\label{sec:background}
One popular strategy for parallelized stochastic optimization is minibatch-SGD \citep{DekelGSX12}, in which one computes $m$ gradients at a fixed point in parallel and then averages these gradients to produce a single SGD step. When $m$ is not too large compared to the variance in $\D$, this procedure gives a linear speedup in theory and uses constant memory. Unfortunately, minibatch-SGD obtains a communication complexity that scales as $\sqrt{N}$ (or $N^{1/4}$ for accelerated variants). In modern problems when $N$ is extremely large, this overhead is prohibitively large. We achieve a communication complexity that is logarithmic in $N$, allowing our algorithm to be run as a near-constant number of map-reduce jobs even for very large $N$. We summarize the state of the art for some prior algorithms algorithms in Table \ref{tbl:algcompare}.

Many prior approaches to reducing communication complexity can be broadly categorized into those that rely on Newton's method and those that rely on the variance-reduction techniques introduced in the SVRG algorithm \citep{Johnson013}. Algorithms that use Newton's method typically make the assumption that $\D$ is a distribution over quadratic losses \citep{ShamirS013,ZhangX15a,reddi2016aide, WangWS17}, and leverage the fact that the expected Hessian is constant to compute a Newton step in parallel. Although quadratic losses are an excellent starting point, it is not clear how to generalize these approaches to arbitrary non-quadratic smooth losses such as encountered in logistic regression. 

Alternative strategies stemming from SVRG work by alternating between a ``batch phase'' in which one computes a very accurate gradient estimate using a large batch of examples and an ``SGD phase'' in which one runs SGD, using the batch gradient to reduce the variance in the updates \citep{FrostigGKS15,lei2016less, shah2016trading, harikandeh2015stopwasting}. Our approach also follows this overall strategy (see Section \ref{sec:approach} for a more detailed discussion of this procedure). However, all prior algorithms in this category make use of carefully specified learning rates in the SGD phase, while our approach makes use of \emph{any} adaptive serial optimization algorithm, even ones that do not resemble SGD at all, such as \citep{cutkosky2017online, orabona2014simultaneous}. This results in a streamlined analysis and a more general final algorithm. Not only do we recover prior results, we can leverage the adaptivity of our base algorithm to obtain better results on sparse losses and to avoid any dependencies on the smoothness parameter $L$, resulting in a much more robust procedure. 

The rest of this paper is organized as follows. In Section \ref{sec:approach} we provide a high-level overview of our strategy. In Section \ref{sec:biasedOCO} we introduce some basic facts about the analysis of adaptive algorithms using online learning, in Section \ref{sec:svrgsketch} we sketch our intuition for combining SVRG and the online learning analysis, and in Section \ref{sec:biasedSVRG} we describe and analyze our algorithm. In Section \ref{sec:complexity} we show that the convergence rate is statistically optimal and show that a parallelized implementation achieves the stated complexities. Finally in Section \ref{sec:empirical} we give some experimental results.

\begin{table*}[t]
\caption{Comparison of distributed optimization algorithms with a dataset of size $N$ and $m$ machines. Logarithmic factors and all constants not depending on $N$ or $m$ have been dropped.} \label{tbl:algcompare}
\begin{center}
\begin{tabular}{ l|r|c|c|c }
\toprule
  Method&Quadratic Loss& Space&Communication&Adapts to $L$\\
 \midrule
  Newton inspired~\citep{ShamirS013,ZhangX15a,reddi2016aide, WangWS17} & Needed & $N/m$  & $1$ & No\\
  accel. minibatch-SGD~\citep{CotterSSS11}  & Not Needed & $1$ & $ N^{1/4} $ &No\\
  prior SVRG-like~\citep{FrostigGKS15,lei2016less, shah2016trading, harikandeh2015stopwasting} & Not Needed & $1$ & $1$& No\\
  \textbf{This work}  & Not Needed & $1$ & $1$ & Yes\\
\hline
\end{tabular}
\end{center}
\end{table*}
\section{Overview of Approach}\label{sec:approach}
Our overall strategy for parallelizing a serial SGD algorithm is based upon the stochastic variance-reduced gradient (SVRG) algorithm \citep{Johnson013}. SVRG is a technique for improving the sample complexity of SGD given access to a stream of i.i.d. samples $f\sim \D$ (as in our setting), as well as the ability to compute \emph{exact} gradients $\nabla F(v)$ in a potentially expensive operation. The basic intuition is to use an exact gradient $\nabla F(v)$ at some ``anchor point'' $v\in W$ as a kind of ``hint'' for what the exact gradient is at nearby points $w$. Specifically, SVRG leverages the theorem that $\nabla f(w) - \nabla f(v) + \nabla F(v)$ is an unbiased estimate of $\nabla F(w)$ with variance approximately bounded by $L(F(v)- F(\x))$ (see (8) in \citep{Johnson013}). Using this fact, the SVRG strategy is:
\begin{enumerate}
  \item Choose an ``anchor point'' $v=w_0$.
  \item Compute an exact gradient $\nabla F(v)$ (this is an expensive operation).
  \item Perform $T$ SGD updates: $w_{t+1} = w_t - \eta(\nabla f(w_t) - \nabla f(v) + \nabla F(v))$ for $T$ i.i.d. samples $f\sim \D$ using the fixed anchor $v$.
  \item Choose a new anchor point $v$ by averaging the $T$ SGD iterates, set $w_0=v$ and repeat 2-4.
\end{enumerate}
By reducing the suboptimality of the anchor point $v$, the variance in the gradients also decreases, producing a virtuous cycle in which optimization progress reduces noise, which allows faster optimization progress. 
This approach has two drawbacks that we will address. First, it requires computing the exact gradient $\nabla F(v)$, which is impossible in our stochastic optimization setup. Second, prior analyses require specific settings for $\eta$ that incorporate $L$ and fail to converge with incorrect settings, requiring the user to manually tune $\eta$ to obtain the desired performance. To deal with the first issue, we can approximate $\nabla F(v)$ by averaging gradients over a mini-batch, which allows us to approximate SVRG's variance-reduced gradient estimate, similar to \citep{FrostigGKS15, lei2016less}. This requires us to keep track of the errors introduced by this approximation.
To deal with the second issue, we incorporate analysis techniques from online learning which allow us replace the constant step-size SGD with any adaptive stochastic optimization algorithm in a black-box manner. This second step forms the core of our theoretical contribution, as it both simplifies analysis and allows us to adapt to $L$.

The overall roadmap for our analysis has five steps:
\begin{enumerate}
  \item We model the errors introduced by approximating the anchor-point gradient $\nabla F(v)$ by a minibatch-average as a ``bias'', so that we think of our algorithm as operating on slightly biased but low-variance gradient estimates.
  \item Focusing first only the bias aspect, we analyze the performance of online learning algorithms with biased gradient estimates and show that so long as the bias is sufficiently small, the algorithm will still converge quickly (Section \ref{sec:biasedOCO}).
  \item Next focusing on the variance-reduction aspect, we show that any online learning algorithm which enjoys a sufficiently adaptive convergence guarantee produces a similar ``virtuous cycle'' as observed with constant-step-size SGD in the analysis of SVRG, resulting in fast convergence (sketched in Section \ref{sec:svrgsketch}, proved in Appendices \ref{sec:smooth} and \ref{sec:biasedolosvrg}).
  \item Combine the previous three steps to show that applying SVRG using these approximate variance-reduced gradients and an adaptive serial SGD algorithm achieves $O(L/\sqrt{N})$ suboptimality using only $O(\sqrt{N})$ serial SGD updates (Sections \ref{sec:biasedSVRG} and \ref{sec:complexity}).
  \item Observe that the batch processing in step 3 can be done in parallel, that this step consumes the vast majority of the computation, and that it only needs to be repeated logarithmically many times (Section \ref{sec:complexity}).
\end{enumerate}


\section{Biased Online Learning}\label{sec:biasedOCO}

A popular way to analyze stochastic gradient descent and related algorithms is through online learning~\citep{shalev2012online}. In this framework, an algorithm repeatedly outputs vectors $w_t$ for $t=1,2,\dots$ in some convex space $W$, and receives gradients $g_t$ such that $\E[g_t]=\nabla F(w_t)$ for some convex objective function $F$.\footnote{The online learning literature often allows for adversarially generated $g_t$, but we consider only stochastically generated $g_t$ here.} Typically one attempts to bound the linearized \emph{regret}:
\begin{align*}
R_T(\x)=\sum_{t=1}^T g_t\cdot(w_t - \x)
\end{align*}
Where $\x=\argmin F$. We can apply online learning algorithms to stochastic optimization via online-to-batch conversion \citep{cesa2004generalization}, which tells us that
\begin{align*}
\E[ F(\overline{w}) - F(\x)] \le \tfrac{\E[R_T(\x)]}{T}
\end{align*}
where $\overline{w} = \frac{1}{T}\sum_{t=1}^T w_t$.

Thus, an algorithm that guarantees small regret immediately guarantees convergence in stochastic optimization. Online learning algorithms typically obtain some sort of (deterministic!) guarantee like
\begin{align*}
R_T(\x) &\le R(\x,\|g_1\|,\dots,\|g_T\|)
\end{align*}
where $R$ is increasing in each $\|g_t\|$. For example, when the convex space $W$ has diameter $D$, AdaGrad~\citep{duchi2011adaptive} obtains $R_T(\x)\le D\sqrt{2\sum_{t=1}^T \|g_t\|^2}$.

As foreshadowed in Section \ref{sec:approach}, we will need to consider the case of \emph{biased gradients}. That is, $\E[g_t]=\nabla F(w_t) + b_t$ for some unknown bias vector $b_t$. Given these biased gradients, a natural question is: to what extent does controlling the regret $R_T(\x)=\sum_{t=1}^T g_t\cdot(w_t-\x)$ affect our ability to control the suboptimality $\E[\sum_{t=1}^T F(w_t) - F(\x)]$? We answer this question with the following simple result:
\begin{restatable}{proposition}{biasedregret}\label{thm:biasedregret}
Define $R_T(\x)=\sum_{t=1}^T g_t\cdot(w_t - \x)$ and $\overline{w} = \frac{1}{T}\sum_{t=1}^T w_t$ where $\E[g_t] = \nabla F(w_t) + b_t$ and $\overline{w}=\frac{1}{T}\sum_{i=1}^T w_t$. Then
\begin{align*}
\E[ F(\overline{w}) - F(\x)] &\le \tfrac{\E[R_T(\x)]}{T}+\tfrac{1}{T}\sum_{t=1}^T \E[\|b_t\|(\|w_t-\x\|)]
\end{align*}
If the domain $V$ has diameter $D$, then
$\E[ F(\overline{w}) - F(\x)] \le \tfrac{\E[R_T(\x)]}{T}+\tfrac{D}{T}\sum_{t=1}^T \E[\|b_t\|]$
\end{restatable}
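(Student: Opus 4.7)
The plan is to mimic the standard online-to-batch argument, introducing one extra line to account for the bias. First I would invoke Jensen's inequality on the convex function $F$ to write $F(\overline{w}) - F(\x) \le \tfrac{1}{T}\sum_{t=1}^T (F(w_t) - F(\x))$, then apply the first-order convexity inequality $F(w_t) - F(\x) \le \nabla F(w_t)\cdot(w_t - \x)$ term-by-term. This reduces the problem to bounding $\tfrac{1}{T}\sum_t \E[\nabla F(w_t)\cdot(w_t - \x)]$ in terms of the biased regret.

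Next I would use the tower property together with the hypothesis $\E[g_t\mid w_t] = \nabla F(w_t) + b_t$ (formally, conditioning on the history through step $t$) to replace the true gradient by the stochastic one:
\begin{align*}
\E[\nabla F(w_t)\cdot(w_t-\x)] = \E[g_t\cdot(w_t-\x)] - \E[b_t\cdot(w_t-\x)].
\end{align*}
Summing over $t$ and dividing by $T$ produces $\E[R_T(\x)]/T$ plus a bias term. A single application of Cauchy--Schwarz, $-b_t\cdot(w_t-\x) \le \|b_t\|\,\|w_t - \x\|$, followed by taking expectation, yields the first stated inequality.

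For the second statement, I would simply note that $\|w_t - \x\| \le D$ deterministically when $W$ has diameter $D$ (since both $w_t$ and $\x$ lie in $W$), so $\E[\|b_t\|\,\|w_t-\x\|] \le D\,\E[\|b_t\|]$, and factoring out $D$ gives the claim.

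There is no real obstacle here; the only subtle point worth being careful about is that $w_t$ is a random variable depending on $g_1,\dots,g_{t-1}$, so the identity $\E[g_t\cdot(w_t-\x)] = \E[(\nabla F(w_t)+b_t)\cdot(w_t-\x)]$ must be justified via conditioning on the history rather than by treating $w_t$ as deterministic. Once that is spelled out, the rest is a two-line computation.
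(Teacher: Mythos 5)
Your proposal is correct and follows essentially the same argument as the paper's proof: convexity of $F$ to pass from function values to gradient inner products, the decomposition $\E[g_t\cdot(w_t-\x)] = \E[(\nabla F(w_t)+b_t)\cdot(w_t-\x)]$, Cauchy--Schwarz on the bias term, Jensen's inequality for the average, and the diameter bound for the second claim. Your explicit remark about conditioning on the history to justify the tower-property step is a point the paper leaves implicit, but it does not change the substance of the argument.
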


Our main convergence results will require algorithms with regret bounds of the form $R(w_\star)\le \psi(w_\star)\sqrt{\sum_{t=1}^T \|g_t\|^2}$ or $R(w_\star)\le \psi(w_\star)\sqrt{\sum_{t=1}^T \|g_t\|}$ for various $\psi$. This is an acceptable restriction because there are many examples of such algorithms, including AdaGrad \citep{duchi2011adaptive}, SOLO \citep{orabona2016scale}, PiSTOL \citep{orabona2014simultaneous} and FreeRex \citep{cutkosky2017online}. Further, in Proposition \ref{thm:regularize} we show a simple trick to remove the dependence on $\|w_t-\x\|$, allowing our results to extend to unbounded domains.

\section{Variance-Reduced Online Learning}\label{sec:svrgsketch}

In this section we sketch an argument that using variance reduction in conjunction with a online learning algorithm guaranteeing regret $R(w_\star)\le \psi(w_\star)\sqrt{\sum_{t=1}^T \|g_t\|^2}$ results in a very fast convergence of $\sum_{t=1}^T \E[F(w_t)-F(w_\star)]=O(1)$ up to log factors. A similar result holds for regret guarantees like $R(w_\star)\le \psi(w_\star)\sqrt{\sum_{t=1}^T \|g_t\|}$ via a similar argument, which we leave to Appendix \ref{sec:biasedolosvrg}. To do this we make use of a critical lemma of variance reduction which asserts that a variance-reduced gradient estimate $g_t$ of $\nabla F(w_t)$ with anchor point $v_t$ has $\E[\|g_t\|^2]\le L(F(w_t) + F(v_t)-2F(w_\star))$ up to constants. This gives us the following informal result:
\begin{proposition}\label{thm:informal}[Informal statement of Proposition \ref{thm:constrainedbiasedOL}]
Given a point $w_t\in W$, let $g_t$ be an unbiased estimate of $\nabla F(w_t)$ such that $\E[\|g_t\|^2]\le L(F(w_t) + F(v_t)-2F(w_\star))$. Suppose $w_1,\dots,w_T$ are generated by an online learning algorithm with regret at most $R(w_\star)\le \psi(w_\star)\sqrt{\sum_{t=1}^T \|g_t\|^2}$. Then
\begin{align}
\E\left[\sum_{t=1}^T F(w_t)-F(w_\star)\right] &= O\left(L\psi(w_\star)^2+\psi(w_\star)\sqrt{\sum_{t=1}^T L\E[F(v_t)-F(w_\star)]}\right)\label{eqn:informal}
\end{align}
\end{proposition}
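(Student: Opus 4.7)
The plan is to chain together four standard ingredients — online-to-batch, the adaptive regret bound, Jensen's inequality, and the variance-reduction bound — and finish with a self-bounding (quadratic) inequality that isolates $S := \sum_{t=1}^T \E[F(w_t)-F(w_\star)]$.

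First, since $g_t$ is unbiased ($b_t = 0$ in the language of Proposition \ref{thm:biasedregret}), convexity of $F$ together with the tower property gives
\begin{align*}
S \;=\; \sum_{t=1}^T \E[F(w_t)-F(w_\star)] \;\le\; \E\!\left[\sum_{t=1}^T \nabla F(w_t)\cdot(w_t-w_\star)\right] \;=\; \E[R_T(w_\star)].
\end{align*}
Applying the hypothesized adaptive regret bound and then Jensen's inequality (concavity of $\sqrt{\cdot}$) yields
\begin{align*}
S \;\le\; \psi(w_\star)\,\E\!\left[\sqrt{\sum_{t=1}^T \|g_t\|^2}\right] \;\le\; \psi(w_\star)\sqrt{\sum_{t=1}^T \E[\|g_t\|^2]}.
\end{align*}

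Next, I plug in the variance-reduction hypothesis $\E[\|g_t\|^2]\le L\bigl(F(w_t)+F(v_t)-2F(w_\star)\bigr)$. Writing $V := \sum_{t=1}^T \E[F(v_t)-F(w_\star)]$, the bound becomes the self-referential inequality
\begin{align*}
S \;\le\; \psi(w_\star)\sqrt{L\,(S+V)},
\end{align*}
equivalently $S^2 \le L\psi(w_\star)^2\, S + L\psi(w_\star)^2\, V$. Solving this quadratic in $S$ via the standard trick (either the quadratic formula, or using $ab\le \tfrac{1}{2}a^2+\tfrac{1}{2}b^2$ on $\sqrt{LS}\cdot \psi(w_\star)\sqrt{L}$) gives
\begin{align*}
S \;\le\; O\!\left(L\psi(w_\star)^2 + \psi(w_\star)\sqrt{L\,V}\right),
\end{align*}
which is exactly the claimed bound after substituting back the definition of $V$.

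The main obstacle — really the only nontrivial step — is the self-bounding inequality: $S$ appears on both sides, and one must be careful to absorb the linear-in-$S$ term on the right into the left-hand side without losing the correct $\sqrt{V}$ dependence. Everything else (online-to-batch, Jensen, substituting the variance bound) is mechanical; the adaptive $\sqrt{\sum\|g_t\|^2}$ form of the regret is precisely what lets Jensen pass cleanly through the square root, which is why the argument does not go through for worst-case $O(\sqrt{T})$ regret guarantees.
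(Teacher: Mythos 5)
Your proposal is correct and follows essentially the same route as the paper: bound the suboptimality by the expected regret, pass Jensen through the adaptive $\sqrt{\sum_t\|g_t\|^2}$ bound, substitute the variance-reduction inequality, and resolve the resulting self-bounding quadratic in $S$ (the paper packages this last step as Proposition \ref{thm:xlesqrt}). No gaps.
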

\begin{proof}
The proof is remarkably simple, and we sketch it in one line here. The full statement and proof can be found in Appendix \ref{sec:biasedolosvrg}.
\begin{align*}
\E\left[\sum_{t=1}^T F(w_t)-F(w_\star)\right]&\le \psi(w_\star)\E\left[\sqrt{\sum_{t=1}^T \|g_t\|^2}\right]\\
&\le\psi(w_\star)\sqrt{\sum_{t=1}^T L\E[F(w_t)-F(w_\star)+ F(v_t)-F(w_\star)]}
\end{align*}
Now square both sides and use the quadratic formula to solve for $\E\left[\sum_{t=1}^T F(w_t)-F(w_\star)\right]$.
\end{proof}

Notice that in Proposition \ref{thm:informal}, the online learning algorithm's regret guarantee $\psi(w_\star)\sqrt{\sum_{t=1}^T \|g_t\|^2}$ does not involve the smoothness parameter $L$, and yet nevertheless $L$ shows up in equation (\ref{eqn:informal}). It is this property that will allow us to adapt to $L$ without requiring any user-supplied information.

\begin{algorithm}[H]
   \caption{\genericalgo\ (\textbf{SVRG} with \textbf{O}nline \textbf{L}earning)}
   \label{alg:biasedsvrgonlinelearning}
\begin{algorithmic}[1]
   \State {\bfseries Initialize:} Online learning algorithm $\mathcal{A}$; Batch size $\hat N$; epoch lengths $0=T_0,\dots,T_K$;
  Set $T_{a:b}=\sum_{i=a}^b T_i$.
   \State Get initial vector $w_1$ from $\mathcal{A}$, set $v_t\gets w_1$.
   \For{$k=1$ {\bfseries to} $K$}
   \State Sample $\hat N$ functions $f_1,\dots,f_{\hat N} \sim \D$
   \State $\nabla \hat F(v_k)\gets \frac{1}{\hat N} \sum_{i=1}^{\hat N} \nabla f_i(v_k)$ \\ ~~~~~~~~ (this step can be done in parallel).
   \For{$t=T_{0:k-1}+1$ {\bfseries to} $T_{0:k}$}
   \State Sample $f\sim \D$.
   \State Give $g_t=\nabla f(w_t)-\nabla f(v_k) + \nabla \hat F(v_k)$ to $\mathcal{A}$.
  \State Get updated vector $w_{t+1}$ from $\mathcal{A}$. 
   \EndFor
   \State $v_{k+1} \gets \frac{1}{T_k}\sum_{t=T_{0:k-1}+1}^{T_{0:k}} w_t$.
   \EndFor
\end{algorithmic}
\end{algorithm}
Variance reduction allows us to generate estimates $g_t$ satisfying the hypothesis of Proposition~\ref{thm:informal}, so that we can control our convergence rate by picking appropriate $v_t$s. We want to change $v_t$ very few times because changing anchor points requires us to compute a high-accuracy estimate of $\nabla F(v_t)$. Thus we change $v_t$ only when $t$ is a power of 2 and set $v_{2^n}$ to be the average of the last $2^{n-1}$ iterates $w_t$. By Jensen, this allows us to bound $\sum_{t=1}^T \E[F(v_t)-F(w_\star)$ by $\sum_{t=1}^T \E[F(w_t)-F(w_\star)]$, and so applying Proposition~\ref{thm:informal} we can conclude $\sum_{t=1}^T \E[F(w_t)-F(w_\star)]=O(1)$.

\section{SVRG with Online Learning}\label{sec:biasedSVRG}
With the machinery of the previous sections, we are now in a position to derive and analyze our main algorithm, presented in \genericalgo.

\genericalgo\ implements the procedure described in Section \ref{sec:approach}. For each of a series of $K$ rounds, we compute a batch gradient estimate $\nabla \hat F(v_k)$ for some ``anchor point'' $v_k$. Then we run $T_k$ iterations of an online learning algorithm. To compute the $t^{th}$ gradient $g_t$ given to the online learning algorithm in response to an output point $w_t$, \genericalgo\ approximates the variance-reduction trick of SVRG, setting $g_t = \nabla f(w_t)-\nabla f(v_k)+\nabla \hat F(v_k)$ for some new sample $f\sim \D$. After the $T_k$ iterations have elapsed, a new anchor point $v_{k+1}$ is chosen and the process repeats.

In this section we characterize \genericalgo's performance when the base algorithm $\mathcal{A}$ has a regret guarantee of $\psi(w_\star)\sqrt{\sum_{t=1}^T \|g_t\|^2}$. We can also perform essentially similar analysis for regret guarantees like $\psi(w_\star)\sqrt{\sum_{t=1}^T \|g_t\|}$, but we postpone this to Appendix \ref{sec:otherbounds}.

In order to analyze \genericalgo, we need to bound the error $\|\nabla \hat F(v_k) - \nabla F(v_k)\|$ uniformly for all $k\le K$. This can be accomplished through an application of Hoeffding's inequality:
\begin{lemma}\label{thm:Zbound}
Suppose that $\D$ is a distribution over $G$-Lipschitz functions. Then with probability at least $1-\delta$, $\max_k \|\nabla \hat F(v_k) - \nabla F(v_k)\| \le \sqrt{\frac{2G^2\log(K/\delta) +G^2}{\hat N}}$.
\end{lemma}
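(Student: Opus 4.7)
The plan is to apply a vector-valued concentration inequality for each fixed $v_k$ and then combine via a union bound over $k=1,\ldots,K$; the $\log(K/\delta)$ factor in the statement is the clear fingerprint of a union bound, while the $G^2/\hat N$ term inside the square root looks like the contribution of the mean (variance) of $\|\nabla \hat F(v_k) - \nabla F(v_k)\|$.

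First, I would fix $k$ and write
\[
\nabla \hat F(v_k) - \nabla F(v_k) = \frac{1}{\hat N}\sum_{i=1}^{\hat N} \bigl(\nabla f_i(v_k) - \nabla F(v_k)\bigr),
\]
where the summands are i.i.d.\ and mean-zero, and since every $f$ in the support of $\D$ is $G$-Lipschitz we have $\|\nabla f_i(v_k)\|\le G$ (and hence also $\|\nabla F(v_k)\|\le G$). Applying Jensen's inequality together with the i.i.d.\ variance computation,
\[
\E\bigl[\|\nabla \hat F(v_k) - \nabla F(v_k)\|\bigr]\le \sqrt{\E\bigl[\|\nabla \hat F(v_k) - \nabla F(v_k)\|^2\bigr]}\le \frac{G}{\sqrt{\hat N}},
\]
which will account for the additive $G^2/\hat N$ inside the square root in the final bound.

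Second, I would control the deviation of $\|\nabla \hat F(v_k) - \nabla F(v_k)\|$ from its mean by the bounded differences (McDiarmid) inequality applied to the map $(f_1,\ldots,f_{\hat N}) \mapsto \|\nabla \hat F(v_k) - \nabla F(v_k)\|$. Swapping any single $f_i$ changes the empirical average by at most $2G/\hat N$ in norm (triangle inequality on two $G$-bounded gradients), so the function has bounded differences $2G/\hat N$. McDiarmid then yields a sub-Gaussian tail with variance proxy of order $G^2/\hat N$, i.e.\ with probability at least $1-\delta'$,
\[
\|\nabla \hat F(v_k) - \nabla F(v_k)\|\le \frac{G}{\sqrt{\hat N}} + G\sqrt{\frac{2\log(1/\delta')}{\hat N}}.
\]
Combining the two contributions under a single square root (using $(a+b)^2$-type manipulation or directly bounding the square) produces a per-$k$ guarantee of the claimed form with $\delta$ in place of $\delta'$.

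Finally, I would union bound over the $K$ anchor points: set $\delta' = \delta/K$, so that with probability at least $1-\delta$ the bound holds simultaneously for all $k\le K$, giving the $\log(K/\delta)$ factor in the stated inequality.

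The main obstacle is the vector-valued nature of the random variable, since plain Hoeffding is scalar; McDiarmid (or an equivalent vector Hoeffding/Pinelis-type inequality) is the cleanest way around this. The only remaining care is to book-keep the constants so as to write the sum of the mean-term $G/\sqrt{\hat N}$ and the deviation term $G\sqrt{2\log(K/\delta)/\hat N}$ in the compact $\sqrt{(2G^2\log(K/\delta)+G^2)/\hat N}$ form. The bound is then independent of the dimension of $W$ and of the location of the $v_k$'s, as required for the uniform statement.
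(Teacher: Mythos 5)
Your proof is correct and follows essentially the same route as the paper's: the paper likewise treats the bounded gradient as a $G^2$-subgaussian vector, applies a Hoeffding-type tail bound whose exponent $\left(\hat N\epsilon-G\sqrt{\hat N}\right)^2/(2\hat N G^2)$ is precisely your Jensen-mean-plus-McDiarmid decomposition, and then union bounds over the $K$ anchor points. Your write-up is if anything more careful about the vector-valued issue, and both arguments share the same harmless constant-packaging quibble when folding $G/\sqrt{\hat N}+G\sqrt{2\log(K/\delta)/\hat N}$ into the single square root $\sqrt{(2G^2\log(K/\delta)+G^2)/\hat N}$.
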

The proof of Lemma \ref{thm:Zbound} is deferred to Appendix \ref{proof:Zbound}.
The following Theorem is now an immediate consequences of the concentration bound Lemma \ref{thm:Zbound} and Propositions \ref{thm:constrainedbiasedOL} and \ref{thm:unconstrainedbiasedOL} (see Appendix).
\begin{theorem}\label{thm:constrainedbiasedsvrg}
Suppose the online learning algorithm $\mathcal{A}$ guarantees regret $R_T(\x)\le \psi(\x)\sqrt{\sum_{t=1}^T \|g_t\|^2}$. Set $b_t=\|\nabla \hat F(v_k)-\nabla F(v_k) \|$ for $t\in[T_{0:k-1}+1,T_{1:k}]$ (where $T_{a:b}:=\sum_{i=a}^b T_i$). Suppose that $T_k/T_{k-1}\le \rho$ for all $k$. Then for $\overline{w} = \frac{1}{T}\sum_{t=1}^T w_t$,
\begin{align*}
\E[F(\overline{w}) - F(\x)] \le & \frac{32(1+\rho)\psi(\x)^2L}{T} + \frac{2\sum_{t=1}^T \E[\|b_t\|(\|w_t-\x\|)]}{T}\\
 & + \frac{2\psi(\x)\sqrt{8LT_1\E[F(v_1)-F(\x)]+2\sum_{t=1}^T\E[\|b_t\|^2]}}{T}
\end{align*}
In particular, if $\mathcal{D}$ is a distribution over $G$-Lipschitz functions, then with probability at least $1-\frac{1}{T}$ we have $\|b_t\|\le \sqrt{\frac{2G^2\log(KT) +G^2}{\hat N}}$ for all $t$. Further, if $\hat N>T^2$ and $V$ has diameter $D$, then this implies
\begin{align*}
\E[F(\overline{w}) - F(\x)] \le & \frac{32(1+\rho)\psi(\x)^2L}{T} 
+ \frac{4\psi(\x)\sigma\sqrt{\log(KT)} }{T\sqrt{T}}
\\ &+ \frac{8\psi(\x)\sqrt{L T_1\E[F(v_1)-F(\x)]}}{T}
+ \frac{GD}{T} + 2\frac{G\sqrt{2\log(KT)+1} D}{T} 
\\ =& O\left(\frac{\sqrt{\log(KT)}}{T}\right)
\end{align*}
\end{theorem}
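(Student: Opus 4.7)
The proof is essentially an assembly of three pieces developed earlier: Proposition \ref{thm:biasedregret}, which converts a biased regret bound into a suboptimality bound with an extra linear bias penalty; the variance-reduced online learning bound (Proposition \ref{thm:constrainedbiasedOL} in the appendix, the formal version of the informal Proposition \ref{thm:informal}), which gives a self-bounding control on $\sum_t \E[F(w_t)-F(\x)]$; and Lemma \ref{thm:Zbound}, which controls the bias $\|b_t\|$ uniformly across epochs.

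First I would invoke Proposition \ref{thm:biasedregret} to reduce $\E[F(\overline w)-F(\x)]$ to $\E[R_T(\x)]/T$ plus the linear bias term $(1/T)\sum_t \E[\|b_t\|\,\|w_t-\x\|]$. Using the regret hypothesis $R_T(\x)\le \psi(\x)\sqrt{\sum_t \|g_t\|^2}$, I would then bound $\E[\|g_t\|^2]$ via the standard SVRG variance decomposition: writing $g_t = (\nabla f(w_t)-\nabla f(v_k)+\nabla F(v_k)) + (\nabla \hat F(v_k)-\nabla F(v_k))$ and applying $\|a+b\|^2\le 2\|a\|^2+2\|b\|^2$, the first summand has expected squared norm bounded by $O(L(F(w_t)+F(v_k)-2F(\x)))$ (the standard smooth-SVRG lemma from Appendix \ref{sec:smooth}), while the second contributes exactly $2\|b_t\|^2$.

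The core of the argument is the variance-reduced virtuous cycle. Substituting the $\E[\|g_t\|^2]$ bound into the regret bound and splitting $\sqrt{a+b}\le \sqrt a+\sqrt b$, I get $\E[R_T(\x)] \le \psi(\x)\sqrt{O(L)\sum_t\E[F(w_t)+F(v_t)-2F(\x)]} + \psi(\x)\sqrt{2\sum_t \E[\|b_t\|^2]}$. To close the loop, observe that $v_t$ is piecewise constant on each epoch $k$ and $v_{k+1}$ is the mean of the $w_t$'s in epoch $k$, so Jensen gives $T_k\E[F(v_{k+1})-F(\x)] \le \sum_{t\in\text{epoch }k}\E[F(w_t)-F(\x)]$. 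Summing and using $T_k/T_{k-1}\le \rho$ produces $\sum_t \E[F(v_t)-F(\x)] \le \rho \sum_t \E[F(w_t)-F(\x)] + T_1\E[F(v_1)-F(\x)]$. Writing $S = \sum_t \E[F(w_t)-F(\x)]$, this yields $S \lesssim \psi(\x)\sqrt{L\bigl((1+\rho)S + T_1\E[F(v_1)-F(\x)]\bigr)} + \psi(\x)\sqrt{\sum_t\|b_t\|^2} + \sum_t \E[\|b_t\|\,\|w_t-\x\|]$, which I would solve by the standard self-bounding trick $x\le A\sqrt x + B \Rightarrow x \le A^2 + 2B$. The $A^2$ term produces the $32(1+\rho)\psi(\x)^2 L/T$ contribution after dividing by $T$, and the remaining square-root and linear-bias terms survive as stated.

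The main obstacle is the bookkeeping between the $w_t$ and $v_t$ sequences, specifically that the first anchor $v_1$ cannot be bounded by Jensen (no prior iterates to average) and must be carried as the additive $T_1\E[F(v_1)-F(\x)]$ term; the geometric growth condition $T_k/T_{k-1}\le \rho$ is precisely what prevents initial-epoch errors from accumulating across rounds. Once the deterministic bound is in place, the final probabilistic rate follows by invoking Lemma \ref{thm:Zbound} with failure probability $1/T$ to get $\max_t \|b_t\| \le \sqrt{(2G^2\log(KT)+G^2)/\hat N}$, using $\hat N > T^2$ to reduce this to $O(\sqrt{\log(KT)}/T)$, and bounding $\|w_t-\x\|\le D$ so that $\sum_t \E[\|b_t\|\,\|w_t-\x\|]=O(GD\sqrt{\log(KT)})$ and $\sum_t \E[\|b_t\|^2]=O(G^2\log(KT)/T)$. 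Substituting into the deterministic bound and handling the low-probability failure event by a trivial $GD$ fallback yields the claimed $O(\sqrt{\log(KT)}/T)$ rate.
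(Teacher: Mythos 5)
Your proposal is correct and follows essentially the same route as the paper: the paper proves this theorem by chaining Proposition \ref{thm:biasedregret}, the variance bound of Proposition \ref{thm:smoothgsquared}, the Jensen/epoch bookkeeping of Proposition \ref{thm:expectedxk}, and the self-bounding inequality of Proposition \ref{thm:xlesqrt} into Proposition \ref{thm:constrainedbiasedOL}, then applies Lemma \ref{thm:Zbound} with $\delta=1/T$ and the diameter bound exactly as you describe. The only differences are cosmetic (you split the square root before solving the self-bounding inequality, whereas the paper keeps a single $a\sqrt{bx+c}+d$ form), and your identification of the $T_1\E[F(v_1)-F(\x)]$ carry-over and the role of $T_k/T_{k-1}\le\rho$ matches the paper's argument.
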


We note that although this theorem requires a finite diameter for the second result, we present a simple technique to deal with unbounded domains and retain the same result in Appendix \ref{sec:biasedolosvrg}

\section{Statistical and Computational Complexity}\label{sec:complexity}
In this section we describe how to choose the batch size $\hat N$ and epoch sizes $T_k$ in order to obtain optimal statistical complexity and computational complexity.
The total amount of data used by \genericalgo\ is $N = K\hat N + T_{0:K}=K\hat N+T$. If we choose $\hat N = T^2$, this is $O(K\hat N)$. Set $T_k = 2T_{k-1}$, with some $T_1>0$ so that $\rho=\max T_k/T_{k-1} = 2$ and $O(K=\log(N))$. Then our Theorem \ref{thm:constrainedbiasedsvrg} guarantees suboptimality $O(\sqrt{\log(TK)}/T)$, which is $O(\sqrt{K\log(TK)}/\sqrt{K\hat N})=O(\sqrt{K\log(TK)}/\sqrt{N})$. This matches the optimal $O(1/\sqrt{N})$ up to logarithmic factors and constants.

The parallelization step is simple: we parallelize the computation of $\nabla \hat F(v_k)$ by having $m$ machines compute and sum gradients for $\hat N/m$ new examples each, and then averaging these $m$ sums together on one machine. Notice that this can be done with constant memory footprint by streaming the examples in - the algorithm will not make any further use of these examples so it's safe to forget them. Then we run the $T_k$ steps of the inner loop in serial, which again can be done in constant memory footprint. This results in a total runtime of $O(K\hat N/m+T)$ - a linear speedup so long as $m< KN/T$. For algorithms with regret bounds matching the conditions of Theorem \ref{thm:constrainedbiasedsvrg}, we get optimal convergence rates by setting $\hat N=T^2$, in which case our total data usage is $N=O(K\hat N)$. This yields the following calculation:



\begin{theorem}\label{thm:complexitysecondorder}
Set $T_k=2T_{k-1}$. Suppose the base optimizer $\mathcal{A}$ in \genericalgo\ guarantees regret $R_T(\x) \le \psi(\x)\sqrt{\sum_{t=1}^T \|g_t\|^2}$, and the domain $W$ has finite diameter $D$. Let $\hat N = \Theta(T^2)$ and $N=K\hat N + T$ be the total number of data points observed. Suppose we compute the batch gradients $\nabla \hat F(v_k)$ in parallel on $m$ machines with $m<\sqrt{N}$. Then for $\overline{w} = \frac{1}{T}\sum_{t=1}^T w_t$ we obtain
\begin{align*}
\E[F(\overline{w})-F(\x)]= \tilde O\left(\frac{1}{\sqrt{N}}\right)
\end{align*}
in time $\tilde O(N/m)$, and space $O(1)$, and $K=\tilde O(1)$ communication rounds.
\end{theorem}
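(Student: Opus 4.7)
The plan is to combine Theorem~\ref{thm:constrainedbiasedsvrg} with careful bookkeeping of the epoch-doubling schedule and the parallelization scheme. The statistical rate is essentially immediate from the prior theorem; the bulk of the work is just counting rounds, samples, time, and memory and verifying they match the advertised complexities.

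First, I would bound $K$ and express $T$ in terms of $N$. With the geometric schedule $T_k = 2T_{k-1}$ we have $\rho = 2$, and $T = T_{0:K} = T_1(2^K - 1)$, so $K = \Theta(\log(T/T_1)) = \tilde O(1)$ in both $T$ and $N$. Since $N = K\hat N + T$ and $\hat N = \Theta(T^2)$, the dominant term is $K\hat N$, giving $N = \tilde\Theta(T^2)$ and hence $T = \tilde\Theta(\sqrt N)$. Plugging $\hat N > T^2$ and $\rho = 2$ into the second part of Theorem~\ref{thm:constrainedbiasedsvrg} yields $\E[F(\overline w) - F(\x)] = O(\sqrt{\log(KT)}/T) = \tilde O(1/T) = \tilde O(1/\sqrt N)$, matching the desired statistical rate.

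Next, I would account for time. Each of the $K$ epochs consists of (i) a batch-gradient computation over $\hat N$ fresh samples, parallelized by having each of $m$ machines sum gradients on $\hat N/m$ samples and then averaging the $m$ partial sums, and (ii) a serial inner loop of length $T_k$ that passes a variance-reduced gradient to $\mathcal A$ at each step. The batch cost across all epochs is $O(K\hat N/m)$, and the inner-loop cost is $\sum_k T_k = T$. Thus total runtime is $O(K\hat N/m + T) = \tilde O(N/m + \sqrt N)$. The assumption $m < \sqrt N$ gives $N/m > \sqrt N$, so the $\sqrt N$ term is absorbed and the runtime is $\tilde O(N/m)$.

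Finally, space and communication follow essentially by inspection. The batch-gradient average can be maintained as a running sum while streaming samples, so each machine needs only $O(1)$ state beyond the current iterate and the anchor point; the base online algorithm $\mathcal A$ is assumed to have $O(1)$-space state as well. Communication occurs only when the $m$ machines exchange their partial sums to form $\nabla \hat F(v_k)$ and when $\mathcal A$'s state is broadcast to start the next batch, which happens exactly $K = \tilde O(1)$ times. The only genuine subtlety is the need to verify that the $m<\sqrt N$ hypothesis is exactly strong enough to prevent the serial $T$ term from dominating $N/m$; once $T = \tilde\Theta(\sqrt N)$ is established this is a one-line check, so I do not anticipate any major obstacle beyond routine bookkeeping.
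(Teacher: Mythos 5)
Your proposal is correct and follows essentially the same route as the paper, which proves this theorem via the bookkeeping paragraph immediately preceding it in Section~\ref{sec:complexity}: $K=O(\log N)$ from the doubling schedule, $N=O(K\hat N)$ so $T=\tilde\Theta(\sqrt N)$, the rate from Theorem~\ref{thm:constrainedbiasedsvrg}, runtime $O(K\hat N/m + T)$ with the serial term absorbed when $m<\sqrt N$, and streaming for $O(1)$ space with one communication round per epoch. No gaps.
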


\section{Implementation}

\subsection{Linear Losses and Dense Batch Gradients}
Many losses of practical interest take the form $f(w) = \ell(w\cdot x, y)$ for some label $y$ and feature vector $x\in \R^d$ where $d$ is extremely large, but $x$ is $s$-sparse. These losses have the property that $\nabla f(w) = \ell'(w\cdot x, y)x$ is also $s$-sparse. Since $d$ can often be very large, it is extremely desirable to perform all parameter updates in $O(s)$ time rather than $O(d)$ time. This is relatively easy to accomplish for most SGD algorithms, but our strategy involves correcting the variance in $\nabla f(w)$ using a \emph{dense} batch gradient $\nabla \hat F(v_k)$ and so we are in danger of losing the significant computational speedup that comes from taking advantage of sparsity. We address this problem through an importance-sampling scheme.

Suppose the $i$th coordinate of $x$ is non-zero with probability $p_i$. Given a vector $v$, let $I(v)$ be the vector whose $i$th component is $0$ if $w_i=0$, or $1/p_i$ is $w_i\ne 0$. Then $\E[I(\nabla f(w))]$ is equal to the all-ones vector. Using this notation, we replace the variance-reduced gradient estimate $\nabla f(w) - \nabla f(v_k) + \nabla \hat F(v_k)$ with $\nabla f(w)-\nabla f(v_k) + \nabla \hat F(v_k) \odot I(\nabla f(w))$, where $\odot$ indicates component-wise multiplication. Since $\E[I(\nabla f(w))]$ is the all-ones vector, $\E[\nabla \hat F(v_k) \odot I(\nabla f(w))] = \nabla \hat F(v_k)$ and so the expected value of this estimate has not changed. However, it is clear that the \emph{sparsity} of the estimate is now equal to the sparsity of $\nabla f(w)$. Performing this transformation introduces additional variance into the estimate, and could slow down our convergence by a constant factor. However, we find that even with this extra variance we still see impressive speedups (see Section \ref{sec:empirical}).

\subsection{Spark implementation}

Implementing our algorithm in the Spark environment is fairly straightforward. \ouralgo switches between two phases: a batch gradient computation phase and a serial phase in whicn we run the online learning algorithm. The serial phase is carried out by the driver while the batch gradient is computed by executors. We initially divide the training data into $C$ approximately 100M chunks, and we use $\min (1000, C)$ executors.  Tree aggregation with depth of $5$ is used to gather the gradient from the executors, which is similar to the operation implemented by Vowpal Wabbing (VW)  \citep{agarwal2014reliable}. We use asynchronous collects to move the instances used in the next serial SGD phase of \ouralgo to the driver while the batch gradient is being computed. We used feature hashing with 23 bits to limit memory consumption. 

\subsection{Batch sizes}
Our theoretical analysis asks for exponentially increasing serial phase lengths $T_k$ and a batch size of of $\hat N=T^2$. In practice we use slightly different settings. We have a constant serial phase length $T_k=T_0$ for all $k$, and an increasing batch size $\hat N_k = kC$ for some constant $C$. We usually set $C=T_0$. The constant $T_k$ is motivated by observing that the requirement for exponentially increasing $T_k$ comes from a desire to offset potential poor performance in the first serial phase (which gives the dependence on $T_1$ in Theorem \ref{thm:constrainedbiasedsvrg}). In practice we do not expect this to be an issue. The increasing batch size is motivated by the empirical observation that earlier serial phases (when we are farther from the optimum) typically do not require as accurate a batch gradient in order to make fast progress.
\begin{table*}[ht!]
\begin{center}
\caption{Statistics of the datasets. The compressed size of the data is reported. B=Billion, M=Million \label{tbl:stat}} 
\begin{tabular}{ l||r|r|r|r|r|r|r }
  \toprule
  Data & \multicolumn{2}{c|}{\# Instance} & \multicolumn{2}{c|}{Data size (Gb)} & \# Features & Avg \# feat & \% positives \\
  --   & Train & Test & Train & Test & & & \\
  \midrule
  \hline
  KDD10 & 19.2M & 0.74M & 0.5 & 0.02 & 29 890 095 & 29.34 & 86.06\%  \\
  KDD12 & 119.7M & 29.9M & 1.6 & 0.5 & 54 686 452 & 11.0 & 4.44\% \\
  ADS SMALL  & 1.216B & 0.356B & 155.0 & 40.3 & 2 970 211 & 92.96 & 8.55\%  \\
  ADS LARGE & 5.613B & 1.097B & 1049.1 & 486.1 & 12 133 899 & 95.72 & 9.42\% \\
  EMAIL & 1.236B & 0.994B & 637.4 & 57.6 & 37 091 273 & 132.12 & 18.74\% \\
\hline
\end{tabular}
\end{center}
\end{table*}
\section{Experiments}\label{sec:empirical}
To verify our theoretical results, we carried out experiments on large-scale (order 100 million datapoints) public datasets, such as KDD10 and KDD12~\footnote{\url{https://www.csie.ntu.edu.tw/~cjlin/libsvmtools/datasets/binary.html}} and on proprietary data (order 1 billion datapoints), such as click-prediction for ads~\citep{ChengC10} and email click-prediction datasets~\citep{anona}. The main statistics of the datasets are shown in Table \ref{tbl:stat}. All of these are large datasets with sparse features, and heavily imbalanced in terms of class distribution.
We solved these binary classification tasks with logistic regression. We tested two well-know scalable logistic regression implementation: Spark ML 2.2.0 and Vowpal Wabbit 7.10.0 (VW)~\footnote{ \url{https://github.com/JohnLangford/vowpal_wabbit/releases/tag/7.10}}. To optimize the logistic loss we used the L-BFGS algorithm implemented by both packages. We also tested minibatch SGD and non-adaptive SVRG implementations. However, we observe that the relationship between non-adaptive SVRG updates and the updates in our algorithm are analogous to the relationship between the updates in constant-step-size SGD and (for example) AdaGrad. Since our experiments involved sparse high-dimensional data, adaptive step sizes are very important and one should not expect these algorithms to be competitive (and indeed they were not).


\begin{figure*}[ht!]
\begin{minipage}{1.0\textwidth}
  \centerline{    
    \includegraphics[width=0.2\columnwidth]{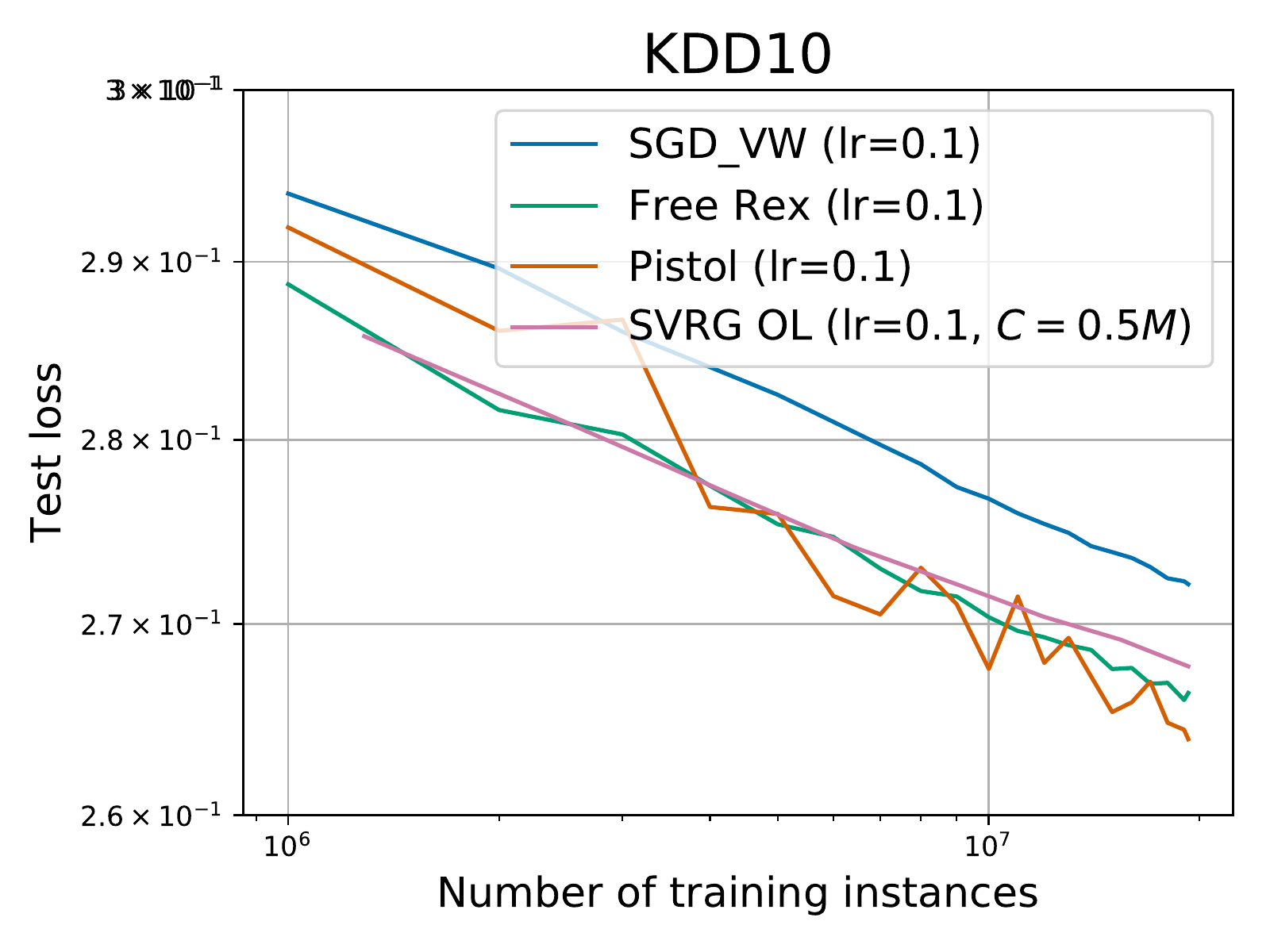}     
    \includegraphics[width=0.2\columnwidth]{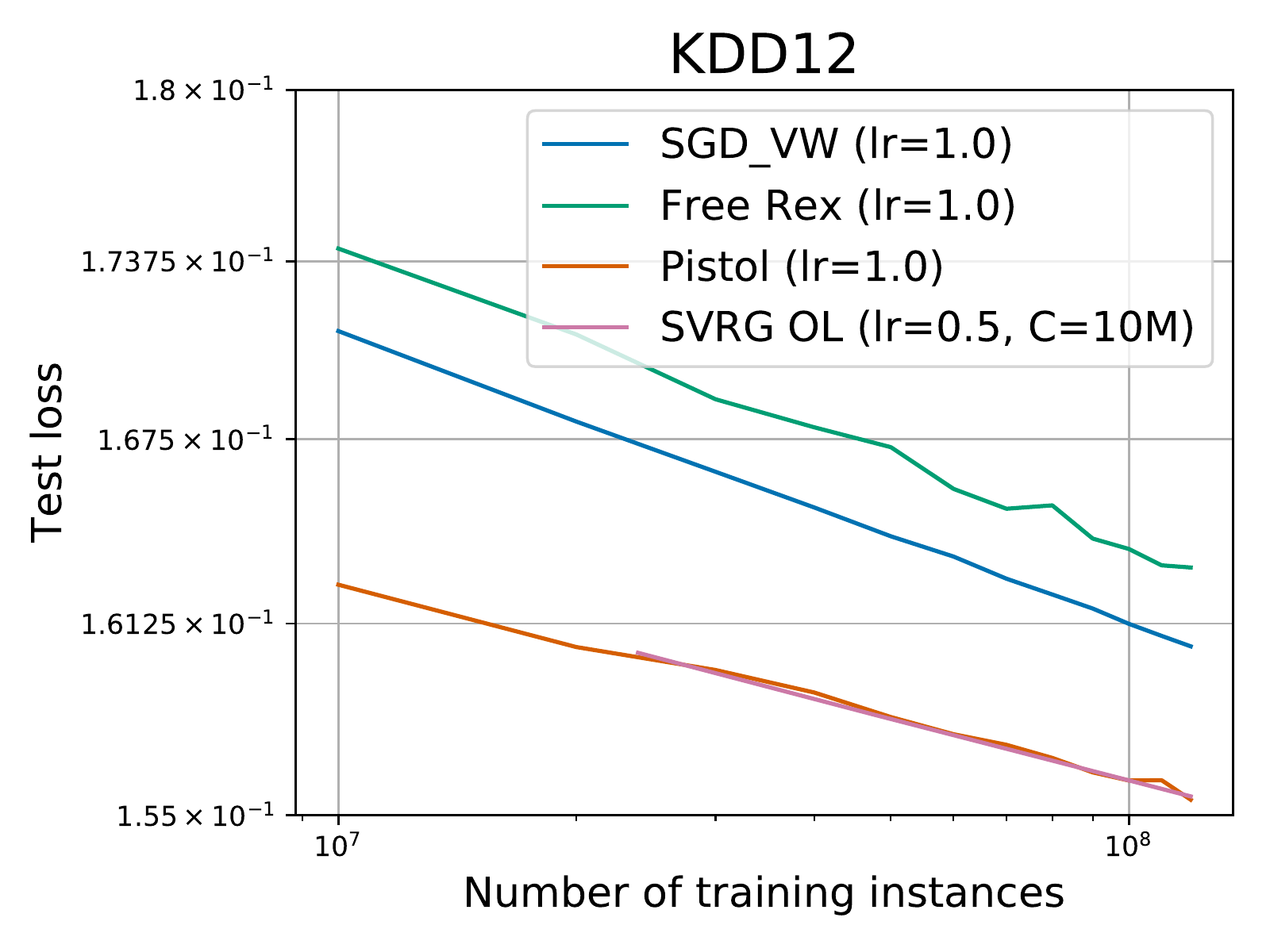}     
    \includegraphics[width=0.2\columnwidth]{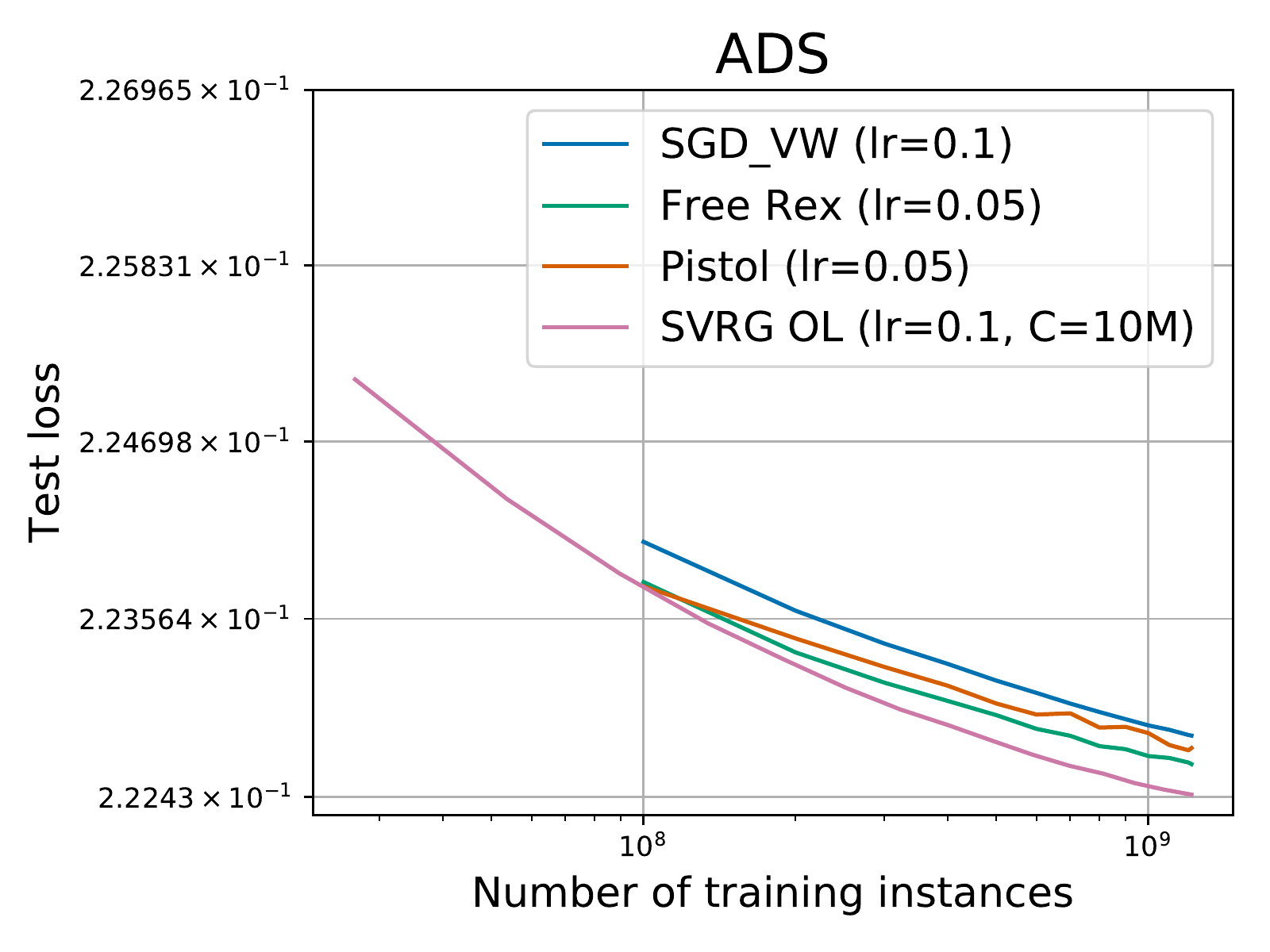}   
    \includegraphics[width=0.2\columnwidth]{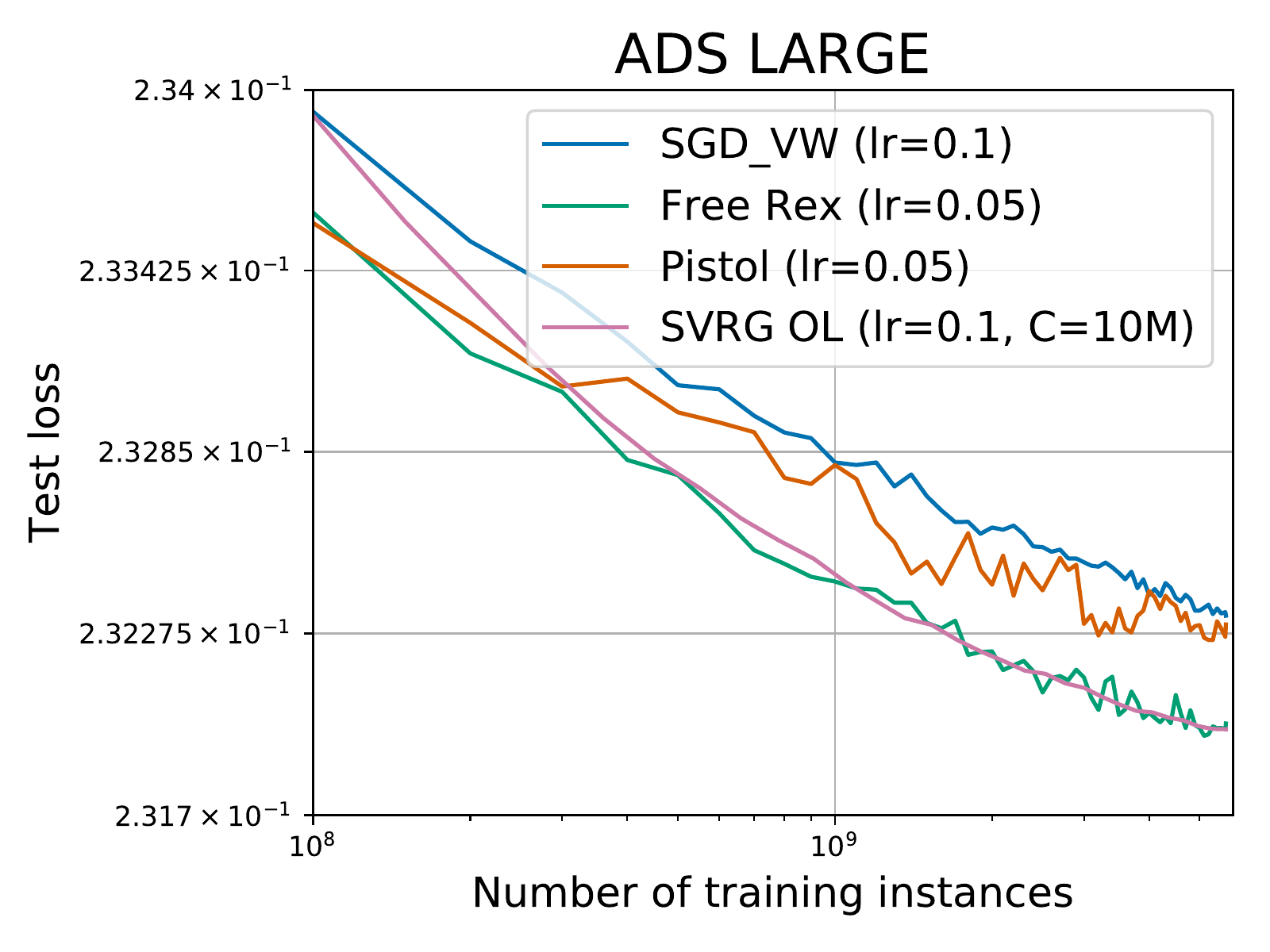}   
    \includegraphics[width=0.2\columnwidth]{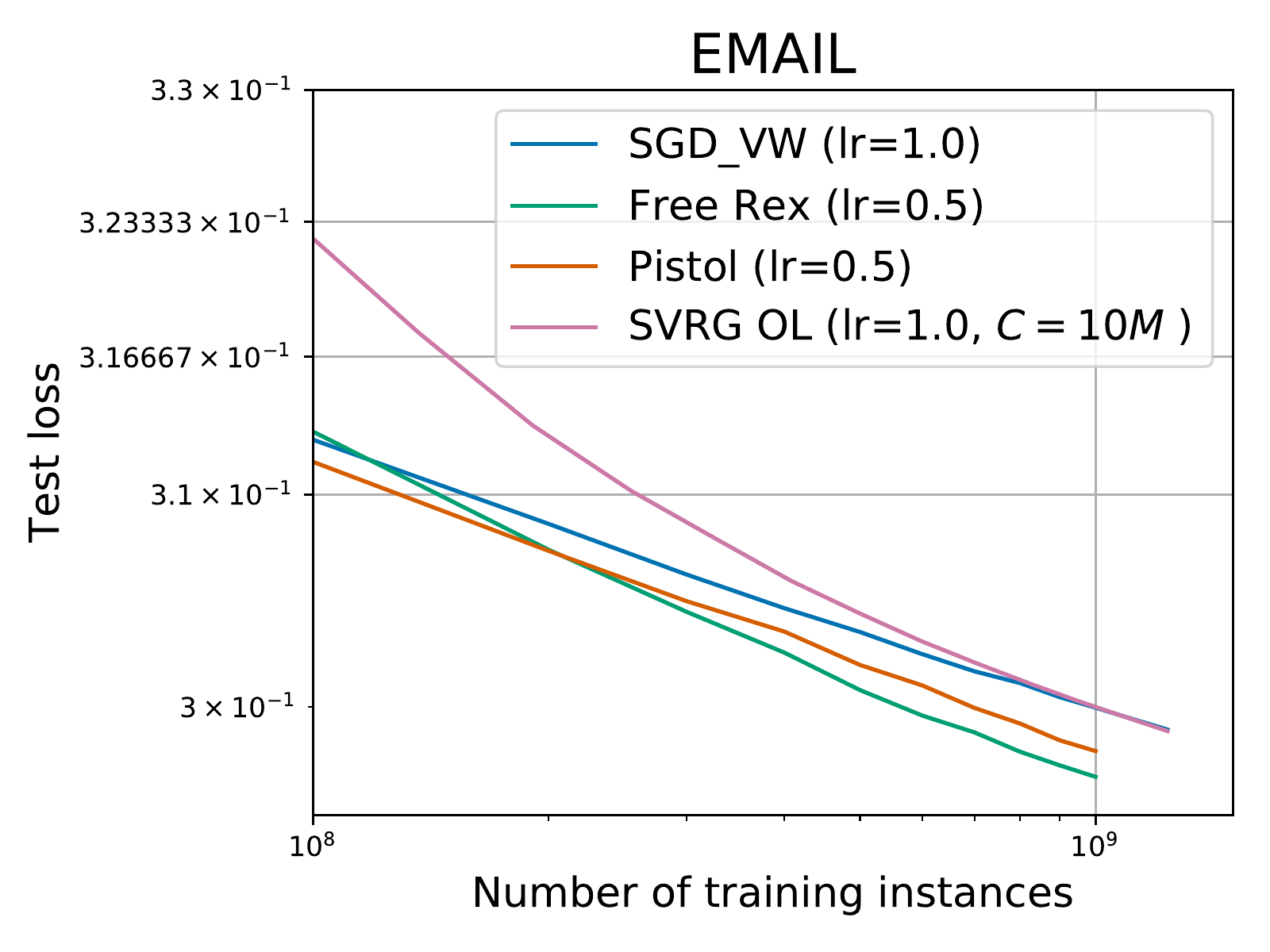}   
  }
  \centerline{
    \includegraphics[width=0.2\columnwidth]{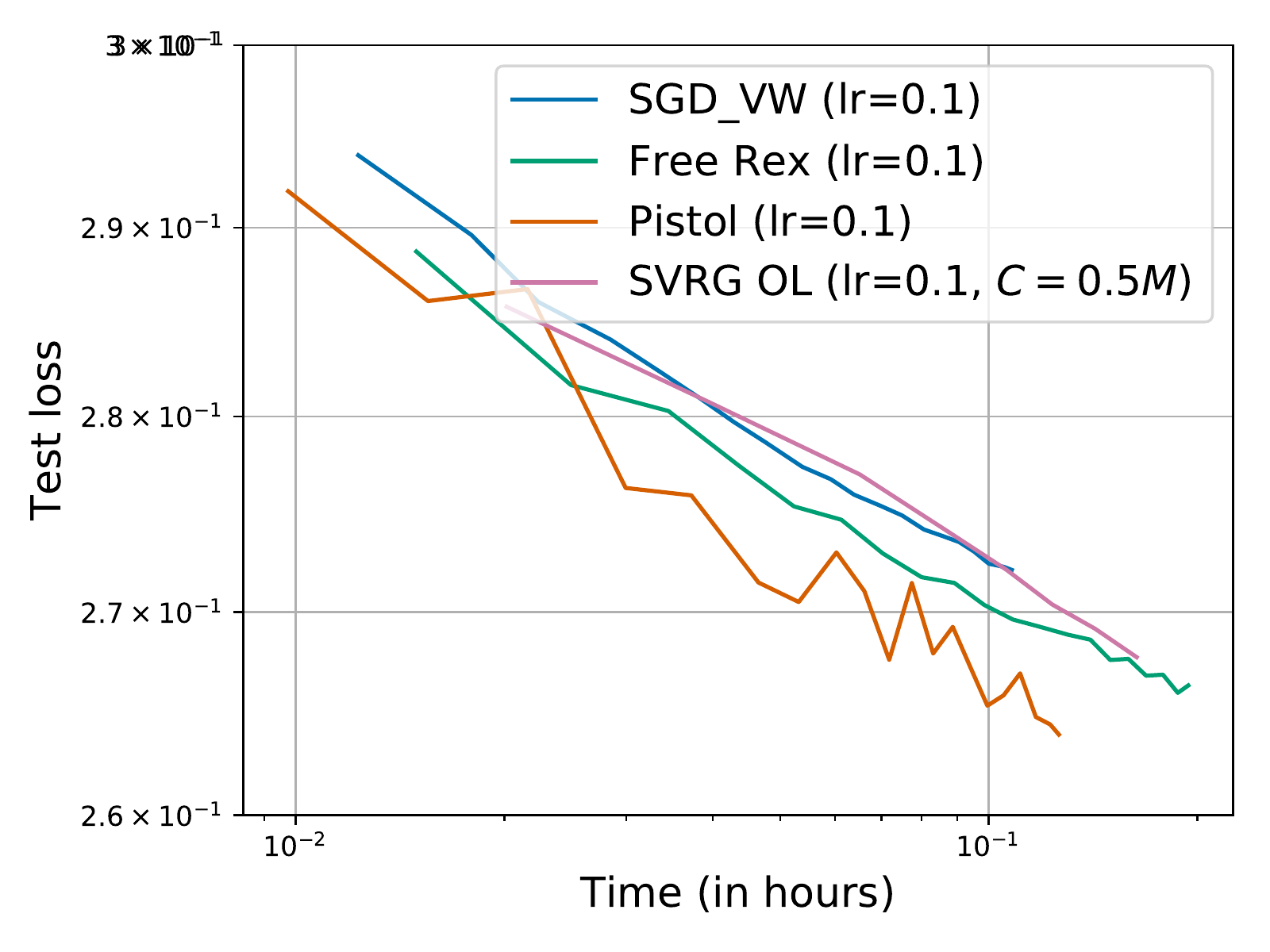}     
    \includegraphics[width=0.2\columnwidth]{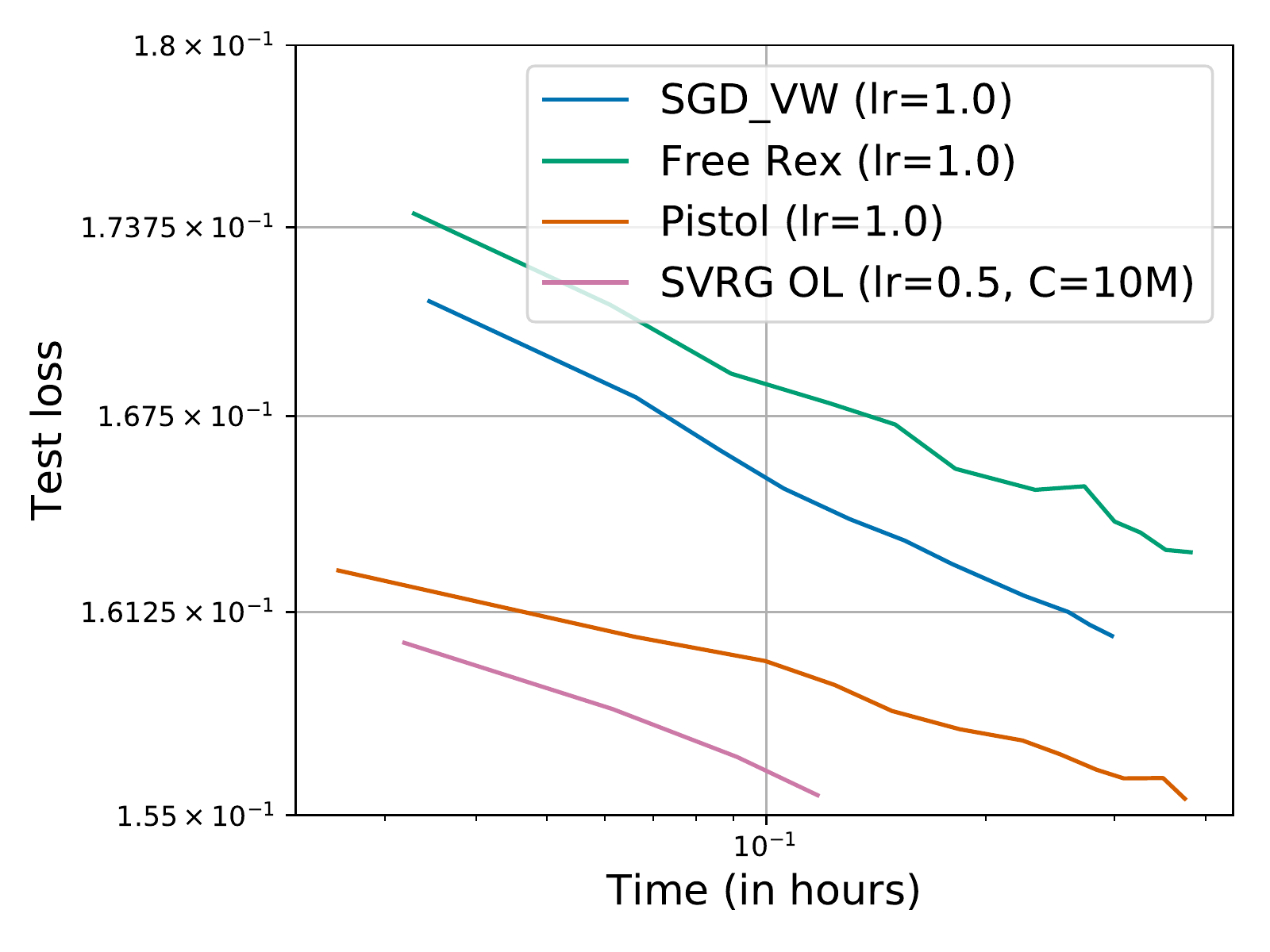}       
    \includegraphics[width=0.2\columnwidth]{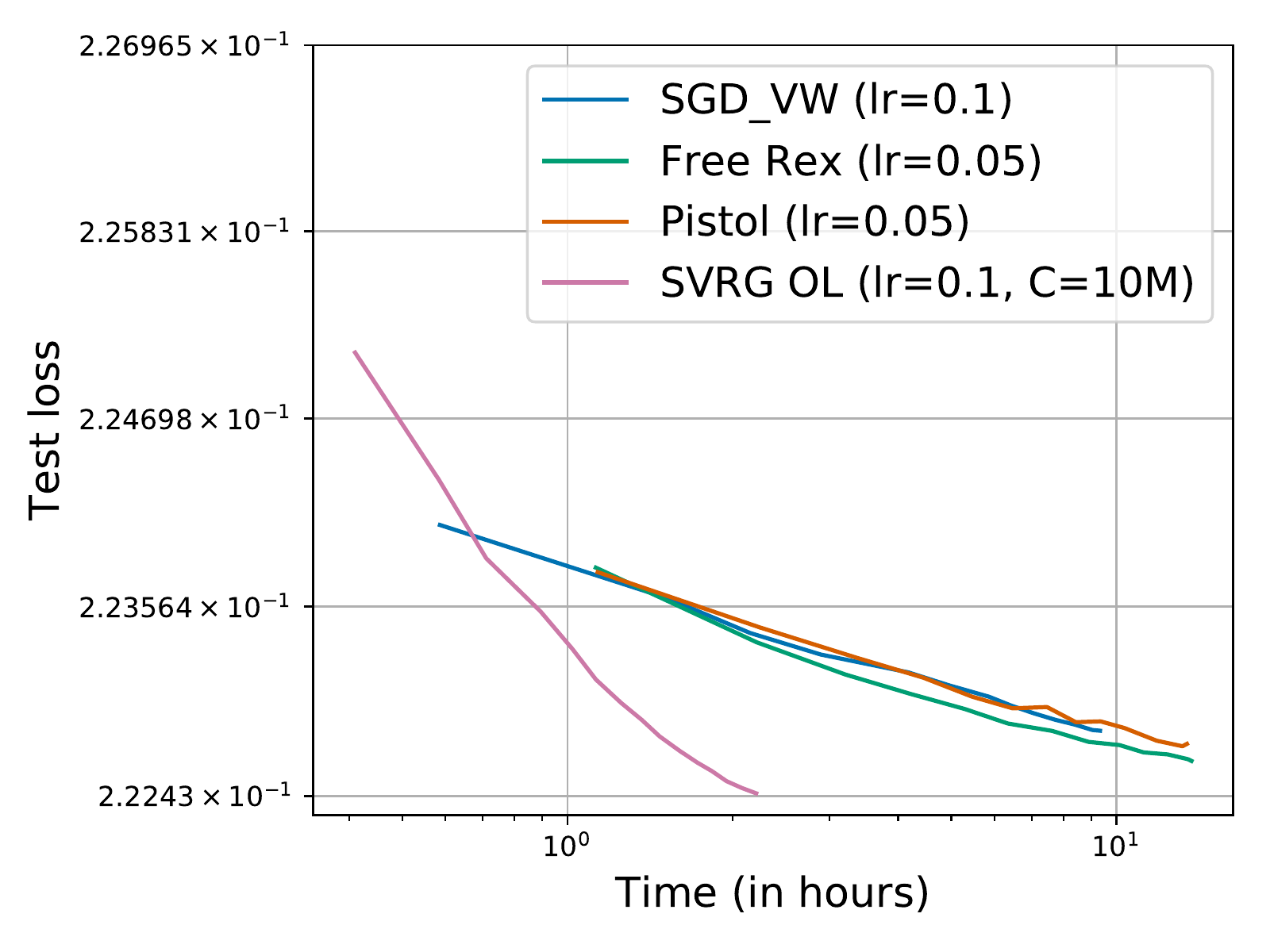}   
    \includegraphics[width=0.2\columnwidth]{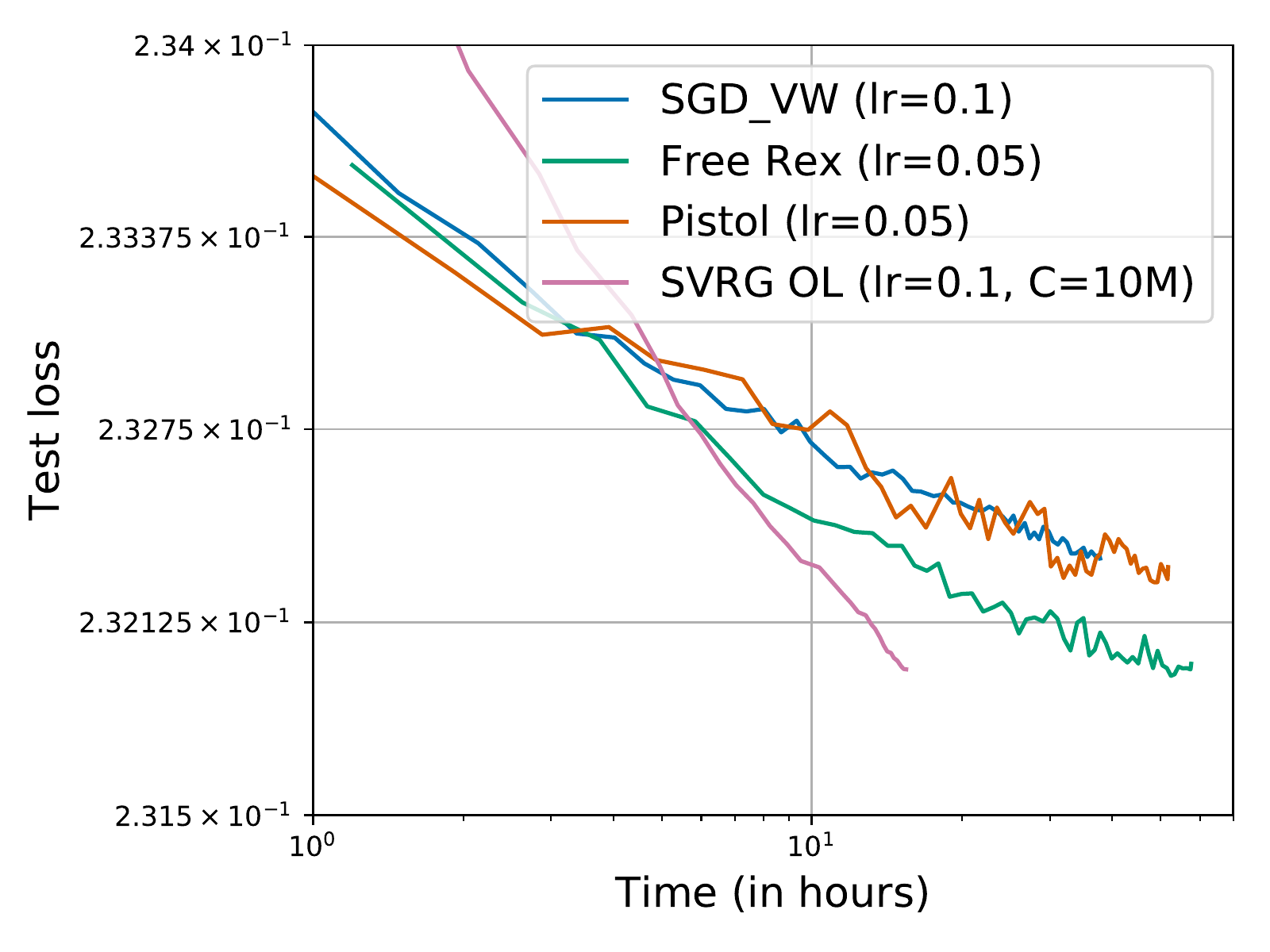}   
    \includegraphics[width=0.2\columnwidth]{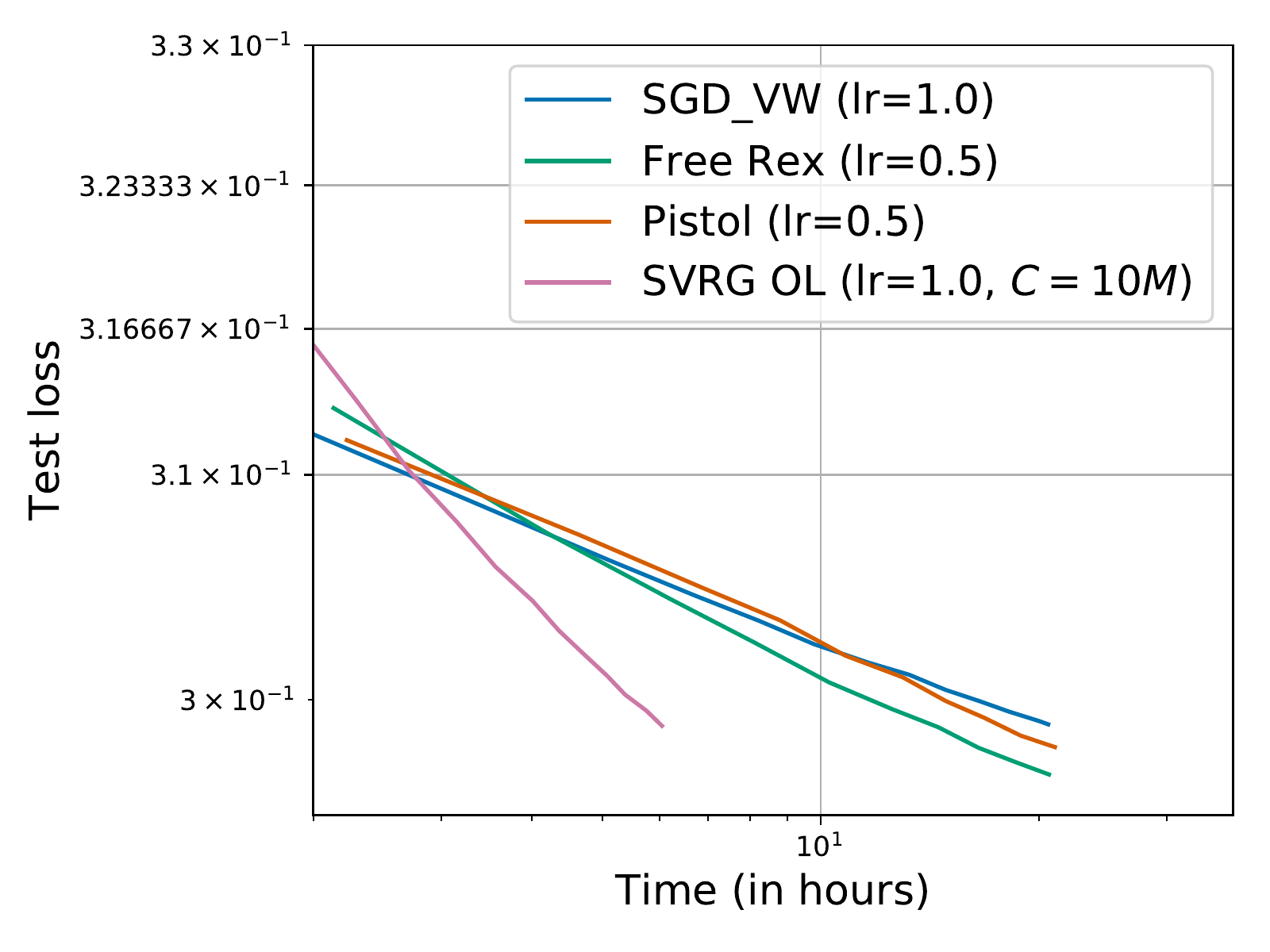}   
  }
  \caption{Test loss of three SGD algorithms (PiSTOL \citep{orabona2014simultaneous}, Vowpal Wabbit (labeled as \textsc{SGD VW}) \citep{ross2013normalized} and \FreeRex\ \citep{cutkosky2017online}) and \genericalgo\ (labeled as \textsc{SVRG OL}, using \FreeRex\ as the base optimizer) on the benchmark datasets.\label{fig:anona}}
\end{minipage}
\end{figure*}

First we compared \genericalgo\ to several non-parallelized baseline SGD optimizers on the different datasets. We plot the loss a function of the number of datapoints processed, as well as the total runtime (Figure \ref{fig:anona}). Measuring the number of datapoints processed gives us a sense of the statistical efficiency of the algorithm and gives a metric that is independent of implementation quality details. We see that, remarkably, \genericalgo's actually performs well as a function of number of datapoints processed and so is a competitive \emph{serial} algorithm before even any parallelization. Thus it is no surprise that we see significant speedups when the batch computation is parallelized.

\begin{wrapfigure}{R}{0.4\textwidth}
\begin{minipage}{0.33\textwidth}
\centerline{
    \includegraphics[width=0.9\columnwidth]{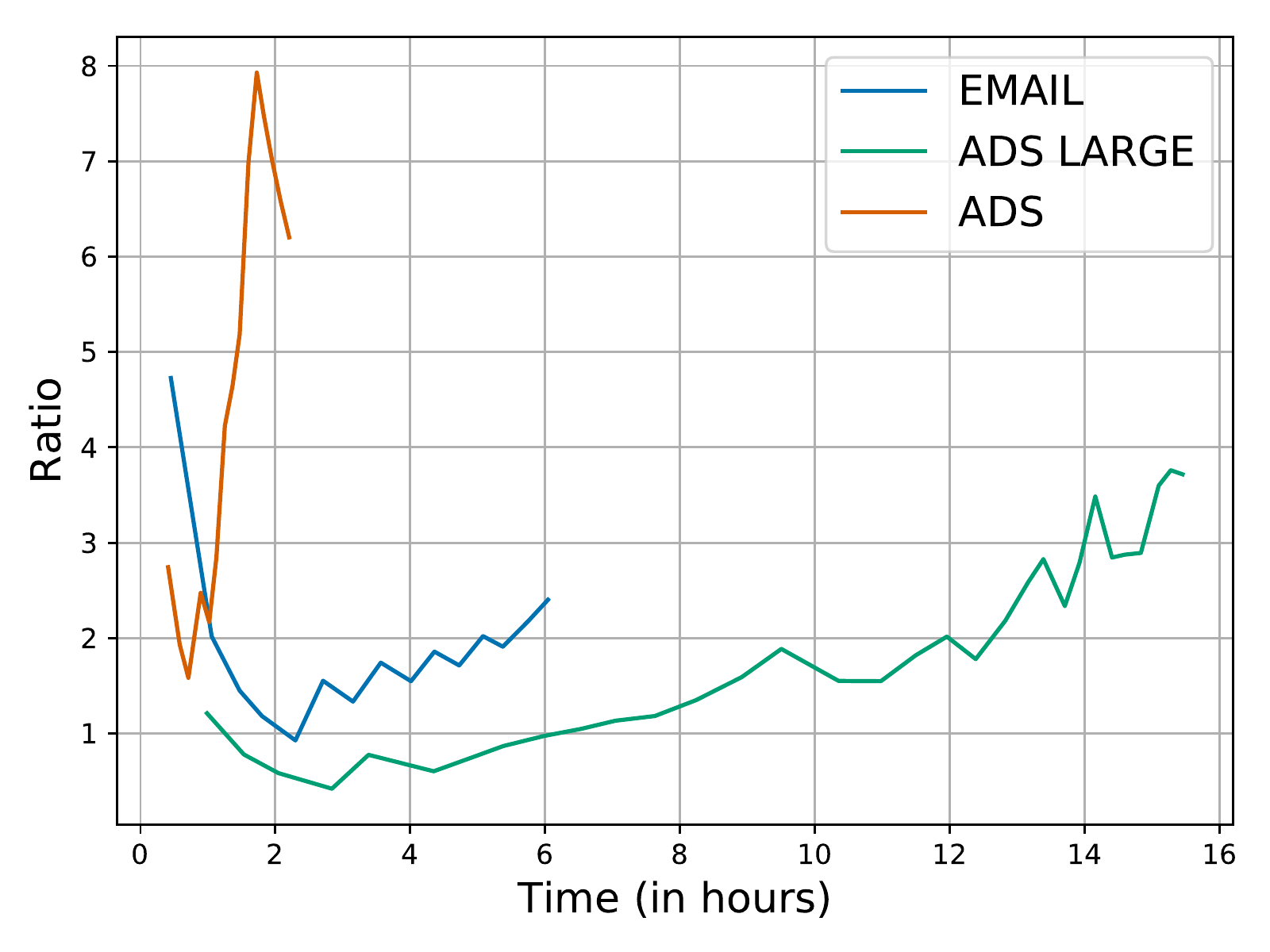}     
  }
  \caption{The speed-up ratio of \genericalgo\ versus \FreeRex\ on various datasets. \label{fig:speedup}}
\end{minipage}
\end{wrapfigure}

To assess the trend of the speed-up with the size of the training data, we plotted the relative speed-up of \genericalgo\ versus \FreeRex\ which is used as base optimizer in \genericalgo. Figure \ref{fig:speedup} shows the fraction of running time of non-parallel and parallel algorithms needed to achieve the same performance in terms of test loss. The x-axis scales with the running time of the parallel \genericalgo\ algorithm. The speed-up increases with training time, and thus the number of training instances processed. This result suggests that our method will indeed match with the theoretical guarantees in case of large enough datasets, although this trend is hard to verify rigorously in our test regime.

In our second experiment, we proceed to compare \genericalgo\ to Spark ML and VW in Table \ref{tbl:results}. These two LBFGS-based algorithms were superior in all metrics to minibatch SGD and non-adaptive SVRG algorithms and so we report only the comparison to Spark ML and VW (see Section \ref{app:exp} for full results). We measure the number of communication rounds, the total training error, the error on a held-out test set, the Area Under the Curve (AUC), and total runtime in minutes. 
Table \ref{tbl:results} illustrates that \genericalgo\ compares well to both Spark ML and VW. Notably, \genericalgo\ uses dramatically fewer communication rounds. On the smaller KDD datasets, we also see much faster runtimes, possibly due to overhead costs associated with the other algorithms. It is important to note that our \genericalgo\ makes only one pass over the dataset, while the competition makes one pass per communication round, resulting in 100s of passes. Nevertheless, we obtain competitive final error due to \genericalgo's high statistical efficiency.

\bgroup
\def\arraystretch{1.0}
\setlength{\tabcolsep}{3pt}
\begin{table}[ht!]
\caption{Average loss and AUC achieved by Logistic regression implemented in Spark ML, VW and \genericalgo. ``Com.'' refers to number of communication rounds and time is measured in minutes.The results on KDD10, ADS LARGE and EMAIL data is presented in App. \ref{app:exp} due to lack of space. \label{tbl:results}} 
\begin{tabular}{ l||c|c|c|c|c||c|c|c|c|c }
\toprule
  Dataset & \multicolumn{5}{c||}{KDD12} & \multicolumn{5}{c}{ADS SMALL} \\   
  \midrule
  & Com. & Train & Test & AUC & Time & Com. & Train & Test & AUC & Time \\
  \midrule
  Spark ML 
  & 100 & 0.15756 & 0.15589 & 75.485 & 36 & 100 & 0.23372 & 0.22288 & 83.356 & 42 \\
  \hline
  Spark ML 
  & 550 & 0.15755 &  0.15570 & 75.453 & 180 & 500 & 0.23365 & 0.22286 & 83.365 & 245 \\
  \hline
  VW 
  & 100  & 0.15398 & 0.15725 & 77.871 & 44 & 100 & 0.23381 & 0.22347 & 83.214 & 114 \\
  \hline
  VW 
  & 500 & {\bf 0.14866} & 0.15550 & {\bf 78.881} & 150 &  500 & 0.23157 & 0.22251 & {\bf 83.499} & 396 \\
  \hline
  \ouralgo 
  & 4 & 0.152740 & {\bf 0.154985} & 78.431 & 8 & 14 & {\bf 0.23147} & {\bf 0.22244} & 83.479 & 94 \\
  \hline
\end{tabular}
\end{table}
\egroup

\section{Conclusion}
We have presented \genericalgo, a generic stochastic optimization framework which combines adaptive online learning algorithms with variance reduction to obtain communication efficiency in parallel architectures. Our analysis significantly streamlines previous work by making black-box use of any adaptive online learning algorithm, thus disentangling the variance-reduction and serial phases of SVRG algorithms. We require only a logarithmic number of communication rounds, and we automatically adapt to an unknown smoothness parameter, yielding both fast performance and robustness to hyperparameter tuning. We developed a Spark implementation of \genericalgo\ and solved real large scale sparse learning problems with competitive performance to L-BFGS implemented by VW and Spark ML.

\small
\bibliography{biblio}
\bibliographystyle{plainnat}

\newpage
\appendix
\onecolumn

\begin{center}
{\LARGE Supplementary material for ``Distributed Stochastic Optimization via Adaptive SGD''}
\end{center}

The appendix starts with the proof of Lemma \ref{thm:Zbound} in Appendix \ref{proof:Zbound}. Next, in Section \ref{sec:proofsforbiasedoco} we provide the proofs from Section \ref{sec:biasedOCO}. In Section \ref{sec:smooth} we review prior results about the properties of smooth convex functions as variance reduction. In Section \ref{sec:biasedolosvrg} we then combine the previous two sections' results to prove the convergence of our \genericalgo. Finally, in Section \ref{app:exp} we provide additional information about our experiments, including statistics of the various datasets as well as a complete reporting of the performance of all tested algorithms.

\section{Proof of Lemma \ref{thm:Zbound}}
\label{proof:Zbound}

\begin{proof}
The assumption on $\D$ implies that $\|\nabla f(v_k)\|$ is bounded by $G$ and so $\nabla f(v_k)$ is $G^2$-subgaussian. Therefore we can apply the Hoeffding and union bounds to obtain tail bounds on $\nabla f(v_k)$:
\begin{align*}
&\text{Prob}\left(\left\|\frac{1}{\hat N}\sum_{i=1}^{\hat N} \nabla f_i(v_k) - \E[\nabla f(v_k)]\right\|\ge \epsilon \text{ for all }k\right)\\
&\le K\exp\left[-\left(\hat N\epsilon-G\sqrt{\hat N}\right)^2/(2\hat N G^2)\right]
\end{align*}
rearranging, with probability at least $1-\delta$, for all $k$ we have
\begin{align*}
\left\|\frac{1}{\hat N}\sum_{i=1}^{N} \nabla f_i(v_k) - \E[\nabla f(v_k)]\right\|\le \sqrt{\frac{2G^2\log(K/\delta) +G^2}{\hat N}}
\end{align*}
as desired.
\end{proof}

\section{Proofs from Section \ref{sec:biasedOCO}}\label{sec:proofsforbiasedoco}

\biasedregret*
\begin{proof}
The proof follows from Cauchy-Schwarz, triangle inequality, and convexity of $F$:
\begin{align*}
\E[R_T]&=\E\left[\sum_{t=1}^T g_t(w_t-\x)\right]\\
&=\E\left[\sum_{t=1}^T\nabla F(w_t)\cdot(w_t-\x) + b_t\cdot(w_t-\x)\right]\\
&\ge\E\left[\sum_{t=1}^T F(w_t) - F(\x) - \|b_t\|\|w_t-\x\|\right]
\end{align*}
Now rearrange and apply Jensen's inequality to recover the first line of the Proposition. The second statement follows from observing that $\|w_t - \x\|\le D$.
\end{proof}


Proposition \ref{thm:biasedregret} shows that the suboptimality increases when both $\|b_t\|$ and $\|w_t\|$ becomes large. Although the online learning algorithm does not have the ability to control $\|b_t\|$, it does have the ability to control $\|w_t\|$, and so we can design a reduction to compensate for $\|b_t\|$. The reduction is simple: instead of $g_t$, provide the algorithm with $g_t+B\frac{w_t}{\|w_t\|}$, where $B$ is a bound such that $B\ge \|b_t\|$ for all $t$, and by abuse of notation we take $w_t/\|w_t\|=0$ when $w_t=0$. Proposition \ref{thm:regularize} below, tells us that, so long as we know the bound $B$, we can obtain an increase in suboptimality that depends only on $B$ and not $w_t$.
\begin{restatable}{proposition}{regularize}\label{thm:regularize}
Suppose an online learning algorithm $\mathcal{A}$ guarantees regret $R_T(\x)\le R(\x,\|g_1\|,\dots,\|g_T\|)$, where $R$ is an increasing function of each $\|g_t\|$. Then if we run $\mathcal{A}$ on gradients $g_t + B\frac{w_t}{\|w_t\|}$, we obtain:
\begin{align*}
&\E\left[\sum_{t=1}^T F(w_t)-F(\x)\right]\le R(\x,\|g_1\|+B,\dots,\|g_T\|+B)+2TB
\end{align*}
and, thus
$\E[F(\overline{w})-F(\x)]\le \frac{1}{T}R(\x,\|g_1\|+B,\dots,\|g_T\|+B)+2B .$
\end{restatable}
\begin{proof}
Observe that $B\frac{w_t}{\|w_t\|}=\nabla h(w_t)$ where $h(x) = B\|x\|$, so that by convexity we have
\begin{align*}
&\E\left[\sum_{t=1}^T F(w_t)-F(\x) + b_t(w_t-\x) + B\|w_t\|-B\|\x\|\right]\\
&\quad\le R\left(\x,\left\|g_1+B\frac{w_t}{\|w_t\|}\right\|,\cdots\right)\\
&\quad\le R(\x,\|g_1\|+B,\dots,\|g_T\|+B)
\end{align*}
Now observe that $b_t(w_t-\x) + B\|w_t\|-B\|\x\|\ge -2B\|\x\|$ to obtain:
\begin{align*}
&\E\left[\sum_{t=1}^T F(w_t)-F(\x)\right]-2TB\|\x\|\\
&\quad\le R(\x,\|g_1\|+B,\dots,\|g_T\|+B)\\
&\E\left[\sum_{t=1}^T F(w_t)-F(\x)\right]\\
&\quad\le R(\x,\|g_1\|+B,\dots,\|g_T\|+B)+2TB
\end{align*}
Finally, use Jensen's inequality to concude the Proposition
\end{proof}

\section{Smooth Losses}\label{sec:smooth}
In the following sections we consider applying an online learning algorithm to gradients $g_t$ of the form $g_t = \nabla f_t(w_t) - \nabla f_t(\hat x) + \nabla F(\hat x) +b_t$ where each $f_t$ is an i.i.d. smooth convex function with $\E[f_t] = F$ and $\hat x$ is some fixed vector. In order to leverage the structure of this kind of $g_t$, we'll need two lemmas from the literature about smooth convex functions (which can be found, for example, in \cite{Johnson013}):

\begin{lemma}\label{thm:smoothbound}
If $f$ is an $L$-smooth convex function and $x,y$ are fixed vectors, then
\begin{align*}
\|\nabla f(x)-\nabla f(y)\|^2 \le 2L(f(x)-f(y) - \nabla f(y)(x-y))
\end{align*}
\end{lemma}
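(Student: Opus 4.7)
The plan is to prove this via the standard ``auxiliary function'' trick that reduces the desired inequality to a single application of the descent lemma for $L$-smooth functions. The key observation is that the right-hand side $f(x) - f(y) - \nabla f(y)(x-y)$ is exactly the Bregman divergence of $f$ from $y$ to $x$, so it is natural to linearize $f$ around $y$ and work with the resulting function.

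Concretely, I would first define the auxiliary function
\begin{align*}
h(z) = f(z) - \nabla f(y) \cdot z .
\end{align*}
Since $f$ is convex and $L$-smooth, so is $h$, and because $\nabla h(y) = \nabla f(y) - \nabla f(y) = 0$, the point $y$ is a global minimizer of $h$. Next, I would invoke the standard descent lemma for $L$-smooth functions, namely $h(z') \le h(z) + \nabla h(z)\cdot(z'-z) + \tfrac{L}{2}\|z'-z\|^2$, at the pair $z = x$ and $z' = x - \nabla h(x)/L$. A short computation gives $h(z') \le h(x) - \tfrac{1}{2L}\|\nabla h(x)\|^2$, and since $y$ is a minimizer we have $h(y) \le h(z')$, hence
\begin{align*}
\tfrac{1}{2L}\|\nabla h(x)\|^2 \le h(x) - h(y) .
\end{align*}

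Finally, I would unfold the definition of $h$: on the left, $\nabla h(x) = \nabla f(x) - \nabla f(y)$, and on the right, $h(x) - h(y) = f(x) - f(y) - \nabla f(y)(x-y)$. Multiplying through by $2L$ gives exactly the claimed bound. There is no real obstacle here — the lemma is a textbook consequence of smoothness plus convexity — so the only ``craft'' in the writeup is choosing the auxiliary function $h$ so that one descent-lemma step immediately delivers the desired Bregman divergence on the right-hand side.
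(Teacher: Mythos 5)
Your proposal is correct and follows essentially the same route as the paper: both linearize $f$ around $y$ to form an auxiliary convex $L$-smooth function with a zero gradient at $y$ (your $h$ differs from the paper's $\tilde f$ only by an additive constant), then apply the descent lemma with step $1/L$ and use that $y$ is a global minimizer. The only cosmetic difference is that the paper minimizes over the step size $z$ and observes the infimum is attained at $1/L$, whereas you plug in $1/L$ directly.
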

\begin{proof}
Set $\tilde f(w) = f(w) - f(y)-\nabla f(y)\cdot (w-y)$. Then $\tilde f$ is still convex and $L$-smooth and $\nabla \tilde f(y) = 0$ so that $\tilde f(y)\le \tilde f(q)$ for all $q$. Therefore
We have
\begin{align*}
0=\tilde f(y)&\le \inf_z \tilde f(x-z\nabla \tilde f(x))\\
&\le \inf_z \tilde f(x) - z\|\nabla \tilde f(x)\|^2+ \frac{L}{2}z^2\|\nabla \tilde f(x)\|^2\\
& = \tilde f(x) - \frac{1}{2L} \|\nabla \tilde f(x)\|^2\\
&= f(x)-f(y) - \nabla f(y)\cdot(x-y)  - \frac{1}{2L}\|\nabla f(x) - \nabla f(y)\|^2
\end{align*}
from which the Lemma follows.
\end{proof}

We can use Lemma \ref{thm:smoothbound} to show the following useful fact:
\begin{lemma}\label{thm:graddiffbound}
Suppose $D$ is a distribution over $L$-smooth convex functions $f$ and $F=\E[f]$. Let $\x\in \argmin F$. Then for all $x$ we have $\E[\|\nabla f(x)-\nabla f(\x)\|^2]\le 2L[F(x)-F(\x)]$.
\end{lemma}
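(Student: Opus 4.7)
The plan is to apply Lemma \ref{thm:smoothbound} pointwise to each $f\sim\mathcal{D}$ with the choices $x\leftarrow x$ and $y\leftarrow w_\star$, giving
\[
\|\nabla f(x)-\nabla f(w_\star)\|^2 \le 2L\bigl(f(x)-f(w_\star) - \nabla f(w_\star)\cdot(x-w_\star)\bigr),
\]
and then to take expectations over $f\sim\mathcal{D}$. The left-hand side becomes exactly the quantity we want to bound, and on the right-hand side linearity of expectation yields $F(x)-F(w_\star) - \nabla F(w_\star)\cdot(x-w_\star)$.

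The final step is to invoke first-order optimality: since $w_\star\in\argmin F$ and $F$ is defined on (we assume) an open/unconstrained domain or at least permits the standard first-order condition, $\nabla F(w_\star)=0$, which kills the inner-product term and leaves the desired bound $\E[\|\nabla f(x)-\nabla f(w_\star)\|^2]\le 2L[F(x)-F(w_\star)]$.

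There is no real obstacle here beyond noting two things. First, Lemma \ref{thm:smoothbound} is a deterministic per-function inequality, so exchanging it with an expectation requires only linearity; no concentration or uniform-in-$f$ argument is needed. Second, the use of $\nabla F(w_\star)=0$ is the only place the minimality of $w_\star$ enters; if the problem were constrained we would instead note that $\nabla F(w_\star)\cdot(x-w_\star)\ge 0$ for $x$ in the feasible set by the variational inequality characterization of a constrained minimum, which still kills the term in the correct direction to preserve the upper bound. So the lemma follows immediately in one short display after taking expectations.
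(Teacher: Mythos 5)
Your proof is correct and follows essentially the same route as the paper: apply Lemma \ref{thm:smoothbound} pointwise with $y=w_\star$, take expectations, and kill the inner-product term using optimality of $w_\star$. Your aside on the constrained case (using the variational inequality $\nabla F(w_\star)\cdot(x-w_\star)\ge 0$) is slightly more careful than the paper, which simply asserts $\E[\nabla f(w_\star)]=\nabla F(w_\star)=0$, but the argument is the same.
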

\begin{proof}
From Lemma \ref{thm:smoothbound} we have
\begin{align*}
\E[\|\nabla f(x)-\nabla f(\x)\|^2]\le 2L\E[f(x)-f(\x)-\nabla f(\x)\cdot (x-\x)]
\end{align*}
and now the result follows since $E[\nabla f(\x)]=\nabla f(\x)=0$.
\end{proof}

With these in hand, we can prove
\begin{proposition}\label{thm:smoothgsquared}
With $g_t = \nabla f(w_t) - \nabla f(v_k) + \nabla F(v_k) +b_t$ for some points $w_t,v_k\in W$ and $b_t\in \R$, we have
\begin{align*}
\E[\|g_t\|^2] &\le8L\E[F(w_t)-F(\x)] + 8L\E[F(v_k) - F(\x)] + 2\E[\|b_t\|^2]
\end{align*}
\end{proposition}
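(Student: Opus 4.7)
The goal is to bound $\E[\|g_t\|^2]$ where $g_t = \nabla f(w_t) - \nabla f(v_k) + \nabla F(v_k) + b_t$. The natural approach is to repeatedly apply the elementary inequality $\|a+b\|^2 \le 2\|a\|^2 + 2\|b\|^2$ to peel the expression apart into pieces that can each be estimated using Lemma \ref{thm:graddiffbound}.

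First I would split off the bias term by writing
\begin{align*}
\E[\|g_t\|^2] \le 2\E[\|\nabla f(w_t) - \nabla f(v_k) + \nabla F(v_k)\|^2] + 2\E[\|b_t\|^2],
\end{align*}
which already accounts for the $2\E[\|b_t\|^2]$ summand appearing in the target bound. The remaining work is thus to bound the variance-reduced gradient term by $4L\E[F(w_t)-F(\x)] + 4L\E[F(v_k)-F(\x)]$.

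Next I would introduce $\nabla f(\x)$ (which equals $0$ in expectation since $\nabla F(\x) = 0$) and regroup:
\begin{align*}
\nabla f(w_t) - \nabla f(v_k) + \nabla F(v_k) = [\nabla f(w_t) - \nabla f(\x)] + [\nabla f(\x) - \nabla f(v_k) + \nabla F(v_k)].
\end{align*}
A second application of $\|a+b\|^2 \le 2\|a\|^2 + 2\|b\|^2$ bounds the squared norm by $2\|\nabla f(w_t) - \nabla f(\x)\|^2 + 2\|\nabla f(v_k) - \nabla f(\x) - \nabla F(v_k)\|^2$. The first piece is directly controlled by Lemma \ref{thm:graddiffbound}, giving $\E[\|\nabla f(w_t) - \nabla f(\x)\|^2] \le 2L\E[F(w_t) - F(\x)]$.

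For the second piece I would observe that $\nabla f(v_k) - \nabla f(\x) - \nabla F(v_k)$ is exactly the centered version of $\nabla f(v_k) - \nabla f(\x)$, since $\E[\nabla f(v_k) - \nabla f(\x)] = \nabla F(v_k) - \nabla F(\x) = \nabla F(v_k)$. Because variance is bounded by the second moment, this gives $\E[\|\nabla f(v_k) - \nabla f(\x) - \nabla F(v_k)\|^2] \le \E[\|\nabla f(v_k) - \nabla f(\x)\|^2] \le 2L\E[F(v_k) - F(\x)]$, again by Lemma \ref{thm:graddiffbound}. Combining the three inequalities yields the claimed constant of $8L$ on each suboptimality term. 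There is no real obstacle here; the only subtlety is remembering to center the $v_k$-term before applying Lemma \ref{thm:graddiffbound}, which is what lets us avoid paying an extra $\|\nabla F(v_k)\|^2$ cost and keeps the bound in terms of function-value suboptimality only.
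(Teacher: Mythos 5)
Your proof is correct and follows essentially the same route as the paper: split off the bias with $\|a+b\|^2\le 2\|a\|^2+2\|b\|^2$, insert $\nabla f(\x)$ to decompose the variance-reduced term, bound the $w_t$-piece directly by Lemma \ref{thm:graddiffbound}, and bound the $v_k$-piece by noting it is the centered version of $\nabla f(v_k)-\nabla f(\x)$ so its second moment is at most that of the uncentered vector. The constants and the key centering observation match the paper's argument exactly.
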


\begin{proof}
Observe that $\E[\nabla f(\x) - \nabla f(v_k)] = \nabla F(\x) - \nabla F(v_k)$, so that by Lemma \ref{thm:graddiffbound}
\begin{align*}
\E[\|\nabla f(\x ) - \nabla f(v_k) + \nabla F(v_k) - \nabla F(\x)\|^2&\le \E[\|\nabla f(\x ) - \nabla f_t(v_k) \|^2]\\
&\le 2L(F(v_k) - F(\x))
\end{align*}
Using this we have
\begin{align*}
\E[\|g_t\|^2]&\le \E[\|\nabla f(w_t) - \nabla f(v_k) + \nabla F(v_k) +b_t\|^2]\\
&\le \E[2\|\nabla f(w_t) - \nabla f(v_k) + \nabla F(v_k)\|^2+2\|b_t\|^2]\\
&\le \E[4\|\nabla f(w_t)-\nabla f(\x)\|^2+4\|\nabla f(\x ) - \nabla f(v_k) + \nabla F(v_k) - \nabla F(\x)\|^2+2\|b_t\|^2]\\
&\le 8L\E[F(w_t)-F(\x)]  + 4\E[\|\nabla f(\x) - \nabla f(v_k)\|^2] + 2\E[\|b_t\|^2]\\
&\le 8L\E[F(w_t)-F(\x)] + 8L\E[F(v_k) - F(\x)] + 2\E[\|b_t\|^2]
\end{align*}
\end{proof}

\section{Biased Online Learning with SVRG updates}\label{sec:biasedolosvrg}
In this section are finally prepared to analyze \genericalgo. In order to do so, we restate the algorithm as Algorithm \ref{alg:biasedonlinelearning}. This algorithm is identical to \genericalgo, but we have introduced the additional notation $b_t = \nabla \hat F(v_k)-\nabla F(v_k)$ so that we can write $g_t=\nabla f(w_t)-\nabla f(v_k)+\nabla F(v_k)+b_t$ instead of $g_t=\nabla f(w_t)-\nabla f(v_k)+\nabla \hat F(v_k)$. Factoring out the term $b_t$ and writing $g_t$ in this way makes clearer how we are able to apply the analysis of biased gradients to the analysis of online learning in the previous section. We analyze Algorithm \ref{alg:biasedonlinelearning} for online learning algorithms $\mathcal{A}$ that obtain \emph{second-order} regret guarantees, $R_T(\x)\le \psi(\x)\sqrt{\sum_{t=1}^T \|g_t\|^2}$ as well as ones that obtain \emph{first-order} regret guarantees $R_T(\x)\le \psi(\x)\sqrt{\sum_{t=1}^T \|g_t\|}$. Algorithm in these families include the well-known AdaGrad \citep{duchi2011adaptive} algorithm and its unconstrained variant SOLO \citep{orabona2016scale} (second order, $\psi(\x) = O(\|\x\|^2)$) as well as FreeRex \citep{cutkosky2017online} and PiSTOL \citep{orabona2014simultaneous} (first order, $\psi(\x)=\tilde O(\|\x\|)$). We will show that for sufficiently small $b_t$, such algorithms result in $\overline{w}=\frac{1}{T}\sum_{t=1}^T w_t$ such that $\E[F(\overline{w}) - F(\x)]=O(1/T)$.

\begin{algorithm}
   \caption{Online Learning with Biased Variance-Reduced Gradients}
   \label{alg:biasedonlinelearning}
\begin{algorithmic}
   \State {\bfseries Initialize:} initial point $w_1$, epoch lengths $0=T_0,T_1,\dots,T_K$ online learning algorithm $\mathcal{A}$.
   \State Get initial vector $w_1$ from $\mathcal{A}$.
   \For{$k=1$ {\bfseries to} $K$}
   \For{$t=T_{0:k-1}+1$ {\bfseries to} $T_{0:k}$}
   \State Sample $f\sim \D$.
   \State $g_t\gets \nabla f(w_t)-\nabla f(v_k)+\nabla F(v_{k})+b_t$.
   \State Send $g_t$ to the online learning algorithm $\mathcal{A}$.
   \State Get updated vector $w_{t+1}$ from $\mathcal{A}$. 
   \EndFor
   \State $v_{k+1} \gets \frac{1}{T_k}\sum_{t=T_{1:k-1}+1}^{T_{1:k}} w_t$.
   \EndFor
\end{algorithmic}
\end{algorithm}

\begin{proposition}\label{thm:expectedxk}
If $T_k/T_{k-1}\le \rho$ for all $k$, then
\begin{align*}
\sum_{k=1}^KT_k\E[F(v_{k}) - F(\x)]&\le T_1\E[F(v_1)-F(\x)]+\rho\sum_{t=1}^T\E[F(w_t)-F(\x)]
\end{align*}
\end{proposition}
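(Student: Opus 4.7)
The plan is to exploit Jensen's inequality on each anchor point $v_k$ (for $k \ge 2$), since each such anchor is defined as an arithmetic average of iterates $w_t$ from the previous epoch, and then to do careful bookkeeping across epochs so that the $T_k/T_{k-1}$ ratio is what absorbs the mismatch between ``summing over anchors weighted by $T_k$'' and ``summing over iterates.''

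Concretely, first I would observe that for every $k \ge 2$, the update rule gives $v_k = \frac{1}{T_{k-1}} \sum_{t=T_{0:k-2}+1}^{T_{0:k-1}} w_t$, i.e., $v_k$ is the uniform average of the $w_t$s generated during epoch $k-1$. By convexity of $F$ and Jensen's inequality,
\begin{align*}
F(v_k) - F(\x) \le \frac{1}{T_{k-1}} \sum_{t=T_{0:k-2}+1}^{T_{0:k-1}} \bigl(F(w_t) - F(\x)\bigr).
\end{align*}
Multiplying by $T_k$, taking expectations, and using the hypothesis $T_k/T_{k-1} \le \rho$ gives
\begin{align*}
T_k \E[F(v_k) - F(\x)] \le \rho \sum_{t=T_{0:k-2}+1}^{T_{0:k-1}} \E[F(w_t) - F(\x)].
\end{align*}

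Next I would sum this inequality over $k=2,\dots,K$. The right-hand side becomes $\rho$ times a sum over $t$ ranging across epochs $1$ through $K-1$, which is bounded above by $\rho \sum_{t=1}^T \E[F(w_t) - F(\x)]$ (where $T = T_{0:K}$). The $k=1$ term $T_1 \E[F(v_1) - F(\x)]$ is not covered by Jensen because $v_1$ is the algorithm's initial point rather than an average of previous iterates, so it simply carries through as the stated boundary term. Adding it back yields
\begin{align*}
\sum_{k=1}^K T_k \E[F(v_k) - F(\x)] \le T_1 \E[F(v_1) - F(\x)] + \rho \sum_{t=1}^T \E[F(w_t) - F(\x)],
\end{align*}
which is exactly the claim.

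There is no serious obstacle here; the whole proof is Jensen plus telescoping the epoch indices. The only thing to be careful about is the off-by-one between anchor labels and epoch labels (anchor $v_k$ averages epoch $k-1$, not epoch $k$), which is what forces the isolated $T_1 \E[F(v_1) - F(\x)]$ boundary term and what makes the ratio bound $T_k/T_{k-1} \le \rho$ the correct hypothesis rather than, say, $T_{k-1}/T_k \le \rho$.
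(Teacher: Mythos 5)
Your proof is correct and follows essentially the same route as the paper's: Jensen's inequality on $v_k$ as the average of epoch-$(k-1)$ iterates, the bound $1/T_{k-1} \le \rho/T_k$, and summation over $k \ge 2$ with the $k=1$ term carried as a boundary term. No issues.
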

\begin{proof}
First we show that for all $k>1$,
\begin{align*}
T_k\E[F(v_{k}) - F(\x)]&\le \rho\sum_{t=T_{0:k-2}+1}^{T_{0:k-1}}\E[F(w_t)-F(\x)]
\end{align*}
This follows from Jensen's inequality by convexity of $F$:
\begin{align*}
\E[F(v_k) - F(\x)]&= \E\left[F\left(\frac{1}{T_{k-1}}\sum_{t=T_{0:k-2}+1}^{T_{0:k-1}} w_t\right)-F(\x)\right]\\
&\le \frac{1}{T_{k-1}}\E\left[\sum_{t=T_{0:k-2}+1}^{T_{0:k-1}} F(w_t) - \E[F(\x)\right]\\
&\le\frac{\rho}{T_k}\sum_{t=T_{0:k-2}+1}^{T_{0:k-1}}\E[F(w_t)-F(\x)]\\
\end{align*}

Now the result of the proposition is immediate:
\begin{align*}
\sum_{k=1}^KT_k\E[F(v_{k}) - F(\x)]&\le T_1\E[F(v_1)-F(\x)]+\sum_{k=2}^K\rho\sum_{t=T_{0:k-2}}^{T_{0:k-1}}\E[F(w_t)-F(\x)]\\
&\le T_1\E[F(v_1)-F(\x)]+\sum_{k=0}^K\rho\sum_{t=T_{0:k}}^{T_{0:k}}\E[F(w_t)-F(\x)]\\
&=T_1\E[F(v_1)-F(\x)]+\rho\sum_{t=1}^T\E[F(w_t)-F(\x)]
\end{align*}
\end{proof}

\begin{proposition}\label{thm:secondorderguarantee}
Suppose the online learning algorithm $\mathcal{A}$ guarantees regret $R_T(\x)\le \psi(\x)\sqrt{\sum_{t=1}^T \|g_t\|^2}$, and $T_k/T_{k-1}\le \rho$ for some constant $\rho$. Then
\begin{align*}
\E\left[\sum_{t=1}^T F(w_t)-F(\x)\right] &\le \psi(\x)\sqrt{8(1+\rho)L\sum_{t=1}^T\E[F(w_t)-F(\x)] + 8LT_1\E[F(v_1)-F(\x)]+2\sum_{t=1}^T\E[\|b_t\|^2]}\\
&\quad\quad+ \sum_{t=1}^T \E[\|b_t\|(\|w_t-\x\|)]
\end{align*}
\end{proposition}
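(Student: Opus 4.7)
The plan is to chain together three already-proved ingredients: the biased online-to-batch conversion in Proposition~\ref{thm:biasedregret}, the variance-reduction bound on $\E[\|g_t\|^2]$ in Proposition~\ref{thm:smoothgsquared}, and the bookkeeping bound on the anchor-point suboptimality in Proposition~\ref{thm:expectedxk}. The one nontrivial analytic move in between is an application of Jensen's inequality (concavity of $\sqrt{\cdot}$) to commute the expectation with the square root inside the regret bound.

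First I would multiply the first inequality of Proposition~\ref{thm:biasedregret} by $T$ (so that $\overline{w}$ drops out by convexity) to obtain
\[
\E\!\left[\sum_{t=1}^T F(w_t)-F(\x)\right] \;\le\; \E[R_T(\x)] \;+\; \sum_{t=1}^T \E[\|b_t\|\,\|w_t-\x\|].
\]
Next I would plug in the hypothesis on $\mathcal{A}$, namely $R_T(\x)\le \psi(\x)\sqrt{\sum_t \|g_t\|^2}$, and apply Jensen to pull the expectation inside the square root:
\[
\E[R_T(\x)] \;\le\; \psi(\x)\,\E\!\left[\sqrt{\textstyle\sum_{t=1}^T \|g_t\|^2}\,\right] \;\le\; \psi(\x)\sqrt{\textstyle\sum_{t=1}^T \E[\|g_t\|^2]}.
\]

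Then I would bound $\sum_t \E[\|g_t\|^2]$. For each $t$ lying in epoch $k$, Proposition~\ref{thm:smoothgsquared} gives $\E[\|g_t\|^2]\le 8L\E[F(w_t)-F(\x)] + 8L\E[F(v_k)-F(\x)] + 2\E[\|b_t\|^2]$. Summing over $t$ collects the middle terms into $\sum_k T_k\,\E[F(v_k)-F(\x)]$, which is exactly the quantity controlled by Proposition~\ref{thm:expectedxk}: that lemma replaces it by $T_1\E[F(v_1)-F(\x)] + \rho\sum_t \E[F(w_t)-F(\x)]$. Hence
\[
\sum_{t=1}^T \E[\|g_t\|^2] \;\le\; 8(1+\rho)L\sum_{t=1}^T \E[F(w_t)-F(\x)] \;+\; 8LT_1\E[F(v_1)-F(\x)] \;+\; 2\sum_{t=1}^T \E[\|b_t\|^2].
\]
Substituting this into the Jensen-regret bound and then into the biased online-to-batch inequality produces the stated conclusion verbatim.

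There is essentially no hard step; the only thing to be careful about is that $v_k$ is a function of past randomness, so the conditional expectation version of Proposition~\ref{thm:smoothgsquared} is what actually applies, and one should take a tower of expectations before summing. The only asymmetry worth flagging is that the implicit inequality is quadratic in $\sqrt{\sum_t \E[F(w_t)-F(\x)]}$: the present proposition stops just before solving it, leaving the quadratic-formula step (the one sketched in Proposition~\ref{thm:informal}) to be applied downstream together with the $\|b_t\|\|w_t-\x\|$ term when deriving Theorem~\ref{thm:constrainedbiasedsvrg}.
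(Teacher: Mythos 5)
Your proposal is correct and follows essentially the same route as the paper's proof, which chains Propositions~\ref{thm:biasedregret}, \ref{thm:smoothgsquared}, and \ref{thm:expectedxk} in exactly this order. If anything, your write-up is slightly more careful than the paper's, since you make explicit the Jensen step $\E[\sqrt{\sum_t\|g_t\|^2}]\le\sqrt{\sum_t\E[\|g_t\|^2]}$ and the tower-of-expectations point about $v_k$, both of which the paper leaves implicit.
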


\begin{proof}
The proof follows by applying, in order, Propositions \ref{thm:biasedregret}, \ref{thm:smoothgsquared}, and \ref{thm:expectedxk}:
\begin{align*}
\E[\sum_{t=1}^T F(w_t)-F(\x)] &\le \E[R_T(\x)] + \sum_{t=1}^T \E[\|b_t\|(\|w_t-\x\|)]\\
&\le \psi(\x)\sqrt{\sum_{t=1}^T \|g_t\|^2} + \sum_{t=1}^T \E[\|b_t\|(\|w_t-\x\|)]\\
&\le \psi(\x)\sqrt{8L\sum_{t=1}^T\E[F(w_t)-F(\x)] + 8L\sum_{k=1}^KT_k\E[F(v_k) - F(\x)] + 2\sum_{t=1}^T\E[\|b_t\|^2]}\\
&\quad\quad+ \sum_{t=1}^T \E[\|b_t\|(\|w_t-\x\|)]\\
&\le \psi(\x)\sqrt{8(1+\rho)L\sum_{t=1}^T\E[F(w_t)-F(\x)] + 8LT_1\E[F(v_1)-F(\x)]+2\sum_{t=1}^T\E[\|b_t\|^2]}\\
&\quad\quad+ \sum_{t=1}^T \E[\|b_t\|(\|w_t-\x\|)]
\end{align*}
\end{proof}

In order to use the above Proposition \ref{thm:secondorderguarantee}, we need a small technical result:
\begin{proposition}\label{thm:xlesqrt}
If $a$, $b$, $c$ and $d$ are non-negative constants such that
\begin{align*}
    x\le a\sqrt{bx+c}+d
\end{align*}
Then
\begin{align*}
    x\le 4a^2b+2a\sqrt{c}+2d
\end{align*}
\end{proposition}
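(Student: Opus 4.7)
The plan is to reduce the inequality $x \le a\sqrt{bx+c}+d$ to a linear inequality in $x$ by combining two elementary tools: subadditivity of the square root and a weighted AM-GM estimate. Since every quantity is nonnegative, I can freely split and bound.

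First I would apply subadditivity, $\sqrt{bx+c} \le \sqrt{bx}+\sqrt{c}$, to write
\begin{align*}
x \;\le\; a\sqrt{bx} + a\sqrt{c} + d.
\end{align*}
This separates the $c$-contribution, which is already in the form that appears in the target bound, from the part that is coupled to $x$.

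Next I would apply a weighted AM-GM to the coupled term to detach $x$ from the square root. Writing $a\sqrt{bx}=\sqrt{(a^2b)\cdot x}$, and using the scaled inequality $\sqrt{uv} \le \tfrac{u\lambda}{2}+\tfrac{v}{2\lambda}$ for $u=a^2 b$, $v=x$, and any $\lambda>0$, gives $a\sqrt{bx} \le \tfrac{a^2 b\lambda}{2}+\tfrac{x}{2\lambda}$. Plugging this in yields
\begin{align*}
x \;\le\; \tfrac{a^2 b\lambda}{2} + \tfrac{x}{2\lambda} + a\sqrt{c} + d,
\end{align*}
which, after moving the $x/(2\lambda)$ term to the left and multiplying by $2\lambda/(2\lambda-1)$ (valid whenever $\lambda>1/2$), gives a bound of the form $x \le C_1 a^2 b + C_2 a\sqrt{c} + C_3 d$ with constants depending on $\lambda$. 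Choosing $\lambda$ in this range (for example $\lambda=1$ already suffices and actually gives sharper constants) produces a bound of the stated shape; the constants $(4,2,2)$ in the proposition are a convenient loose rounding that is easy to quote later.

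There is essentially no obstacle here — the only thing to be careful about is verifying that the chosen $\lambda$ produces coefficients no larger than $(4,2,2)$ so that the inequality in the proposition is implied; this is a routine check. I would conclude by writing the final line as $x \le 4a^2 b + 2a\sqrt{c}+2d$, completing the proof.
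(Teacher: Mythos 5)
Your proof is correct, and it takes a genuinely different route from the paper's. The paper assumes $x\ge 2d$ (otherwise the bound is immediate), absorbs $d$ into $x/2$, squares to get the quadratic inequality $x^2\le 4a^2bx+4a^2c$, and invokes the quadratic formula. You instead linearize directly: subadditivity $\sqrt{bx+c}\le\sqrt{bx}+\sqrt{c}$ followed by AM--GM, $a\sqrt{bx}\le \tfrac{a^2b\lambda}{2}+\tfrac{x}{2\lambda}$, lets you move the $x$-term to the left and solve a linear inequality with no case split. With $\lambda=1$ this gives
\begin{align*}
x\le \tfrac{a^2b}{2}+\tfrac{x}{2}+a\sqrt{c}+d \quad\Longrightarrow\quad x\le a^2b+2a\sqrt{c}+2d,
\end{align*}
which is in fact \emph{sharper} than the stated $4a^2b+2a\sqrt{c}+2d$ (and certainly implies it, since $a^2b\ge 0$). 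Your argument is arguably cleaner and avoids both the case analysis and the quadratic formula. The one thing the paper's approach buys is that the same template (assume $x$ exceeds twice the additive slack, square/raise to a power, solve) carries over verbatim to Proposition \ref{thm:xlefourthrt}, where the bound involves a nested radical $a\sqrt{b\sqrt{cx+d}+e}+f$ and the resulting polynomial inequality is quartic; your linearization would work there too but would need to be applied twice.
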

\begin{proof}
Suppose $x\ge 2d$. Then we have
\begin{align*}
    \frac{x}{2}&\le x-d\le a\sqrt{bx+c}\\
    x^2&\le 4a^2bx+4a^2c
\end{align*}
Now we use the quadratic formula to obtain
\begin{align*}
    x&\le \frac{4a^2b}{2}+\frac{\sqrt{16a^4b^2+16a^2c}}{2}\\
    &\le 4a^2b+2a\sqrt{c}
\end{align*}
Since we assumed $x\ge 2d$ to obtain this bound, we conclude that $x$ is at most the maximum of $4a^2b+2a\sqrt{c}$ and $2d$, which is bounded by their sum.
\end{proof}

Now we apply this result to obtain the formal statement of Proposition \ref{thm:informal}:
\begin{proposition}\label{thm:constrainedbiasedOL}
Suppose the online learning algorithm $\mathcal{A}$ guarantees regret $R_T(\x)\le \psi(\x)\sqrt{\sum_{t=1}^T \|g_t\|^2}$. Then for $\overline{w} = \frac{1}{T}\sum_{t=1}^T w_t$,
\begin{align*}
\E[F(\overline{w}) - F(\x)]&\le \frac{32(1+\rho)\psi(\x)^2L}{T} + \frac{2\psi(\x)\sqrt{8LT_1\E[F(v_1)-F(\x)]+2\sum_{t=1}^T\E[\|b_t\|^2]}}{T}\\
&\quad\quad + \frac{2\sum_{t=1}^T \E[\|b_t\|(\|w_t-\x\|)]}{T}
\end{align*}
In particular, if $\|b_t\|\le \frac{\sigma}{T}$ for all $t$ for some $\sigma$, and $V$ has diameter $D$, then
\begin{align*}
\E[F(\overline{w}) - F(\x)]&\le \frac{32(1+\rho)\psi(\x)^2L}{T} + \frac{4\psi(\x)\sigma }{T\sqrt{T}}+\frac{8\psi(x)\sqrt{T_1\E[F(v_1)-F(\x)]}}{T} + 2\frac{\sigma D}{T}
\end{align*}
\end{proposition}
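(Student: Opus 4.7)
The plan is to chain together three results that are already in place: the biased regret bound (Proposition~\ref{thm:biasedregret}), the variance bound for variance-reduced gradients on smooth losses (Proposition~\ref{thm:smoothgsquared}), and the anchor-point-to-iterate comparison (Proposition~\ref{thm:expectedxk}). These have already been packaged together in Proposition~\ref{thm:secondorderguarantee}, which bounds $X := \E\!\left[\sum_{t=1}^{T} F(w_t) - F(\x)\right]$ by an expression of the form $\psi(\x)\sqrt{b\, X + c} + d$, where
\[
b = 8(1+\rho)L,\quad c = 8LT_1\E[F(v_1)-F(\x)] + 2\sum_{t=1}^T \E[\|b_t\|^2],\quad d = \sum_{t=1}^T \E[\|b_t\|\,\|w_t-\x\|].
\]
So the first step is simply to quote Proposition~\ref{thm:secondorderguarantee} and read off $a,b,c,d$.

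The second step is to solve this implicit inequality for $X$ using the scalar lemma Proposition~\ref{thm:xlesqrt}, which yields $X \le 4a^2 b + 2a\sqrt{c} + 2d$. Plugging in $a=\psi(\x)$ and $b=8(1+\rho)L$ gives the $32(1+\rho)\psi(\x)^2 L$ leading term; keeping $c$ and $d$ as-is gives the other two terms. Dividing by $T$ and applying Jensen's inequality (convexity of $F$) to pass from $\tfrac{1}{T}\sum_t \E[F(w_t)-F(\x)]$ to $\E[F(\bar w)-F(\x)]$ produces the first displayed bound of the proposition verbatim.

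For the second displayed bound, the plan is to specialize: the hypothesis $\|b_t\|\le \sigma/T$ gives $\sum_t \E[\|b_t\|^2] \le \sigma^2/T$, so $c \le 8LT_1\E[F(v_1)-F(\x)] + 2\sigma^2/T$; the diameter hypothesis gives $\|w_t-\x\|\le D$, so $d \le \sigma D$. Finally, split $\sqrt{c}$ using $\sqrt{u+v}\le \sqrt{u}+\sqrt{v}$ to separate the $8LT_1\E[F(v_1)-F(\x)]$ and $2\sigma^2/T$ contributions, and simplify the numerical constants ($2\psi\sqrt{8L T_1 \cdots}/T$ becomes $\tfrac{8\psi \sqrt{LT_1\cdots}}{T}$ up to an absorbed $\sqrt{2}$; $2\psi\sqrt{2\sigma^2/T}/T$ becomes $\tfrac{4\psi\sigma}{T\sqrt T}$ after rounding the $2\sqrt 2$ constant).

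There is really no single hard step here — the proof is entirely bookkeeping, and all of the analytic content has already been quarantined inside Propositions~\ref{thm:secondorderguarantee} and~\ref{thm:xlesqrt}. The only place that requires a moment of care is matching the constants in the specialization: one has to decide whether to keep a clean $\sqrt{a+b}\le \sqrt a+\sqrt b$ split (cleaner statement, slightly looser constants) or to keep the joint square root (tighter but uglier). I would take the split route to land on exactly the stated form.
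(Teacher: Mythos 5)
Your proof is correct and follows essentially the same route as the paper: invoke Proposition~\ref{thm:secondorderguarantee}, solve the resulting implicit inequality with the scalar lemma (Proposition~\ref{thm:xlesqrt}), divide by $T$ and apply Jensen, then specialize via $\sum_t\E[\|b_t\|^2]\le\sigma^2/T$, $\|w_t-\x\|\le D$, and a $\sqrt{u+v}\le\sqrt{u}+\sqrt{v}$ split. You even cite the correct auxiliary lemma where the paper's own proof text mistakenly refers to Proposition~\ref{thm:expectedxk} instead of Proposition~\ref{thm:xlesqrt}.
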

\begin{proof}
Applying Propositions \ref{thm:expectedxk} to the result of Proposition \ref{thm:secondorderguarantee}, we have
\begin{align*}
\E[\sum_{t=1}^T F(w_t)-F(\x)]&\le32(1+\rho)\psi(\x)^2L + 2\psi(\x)\sqrt{8LT_1\E[F(v_1)-F(\x)]+2\sum_{t=1}^T\E[\|b_t\|^2]}\\
&\quad\quad + 2\sum_{t=1}^T \E[\|b_t\|(\|w_t-\x\|)]
\end{align*}
Now divide by $T$ and use Jensen's inequality to conclude the first result. The second follows since $\sqrt{a+b}\le \sqrt{2}(\sqrt{a}+\sqrt{b})$ for all positive $a,b$.
\end{proof}

Now let's prove bounds using the regularization trick from Proposition \ref{thm:regularize} that allows us to avoid assuming a finite-diameter domain.
\begin{proposition}\label{thm:unconstrainedbiasedOL}
Suppose the online learning algorithm $\mathcal{A}$ guarantees regret $R_T(\x)\le \psi(\x)\sqrt{\sum_{t=1}^T \|g_t\|^2}$. Let $B$ be a uniform upper bound on $\|b_t\|$. Run $\mathcal{A}$ on the gradients $\|g_t\|+B\frac{w_t}{\|w_t\|}$. Then for $\overline{w} = \frac{1}{T}\sum_{t=1}^T w_t$,
\begin{align*}
\E[F(\overline{w})-F(\x)]&\le\frac{64(1+\rho)\psi(\x)^2L}{T} + \frac{2\psi(\x)\sqrt{16LT_1\E[F(v_1)-F(\x)]+6TB^2]}}{T} + 2B\|\x\|
\end{align*}
In particular, if $B\le \frac{\sigma}{T}$, we have
\begin{align*}
\E[F(\overline{w})-F(\x)]&\le\frac{64(1+\rho)\psi(\x)^2L}{T} + \frac{2\psi(\x)\sqrt{16LT_1\E[F(v_1)-F(\x)]+6\sigma/T]}}{T} + 2\frac{\sigma \|\x\|}{T}
\end{align*}
\end{proposition}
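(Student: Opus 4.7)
The plan is to follow the same template as the proof of Proposition \ref{thm:constrainedbiasedOL}, but invoke the regularization reduction of Proposition \ref{thm:regularize} in place of Proposition \ref{thm:biasedregret}. The upshot is that the diameter-dependent penalty $\sum_t \E[\|b_t\|\,\|w_t-\x\|]$ is replaced by the domain-free penalty $2TB\|\x\|$, which is exactly what allows the bounded-domain hypothesis to be dropped.

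First I would apply Proposition \ref{thm:regularize} to the biased gradients $g_t = \nabla f(w_t) - \nabla f(v_k) + \nabla F(v_k) + b_t$. Using $\|b_t\|\le B$, monotonicity of $R$ in each argument, and the triangle inequality $\left\|g_t + B\tfrac{w_t}{\|w_t\|}\right\| \le \|g_t\| + B$, this yields
\begin{align*}
\E\!\left[\sum_{t=1}^T F(w_t) - F(\x)\right] \le \psi(\x)\,\E\!\left[\sqrt{\sum_{t=1}^T (\|g_t\|+B)^2}\right] + 2TB\|\x\|.
\end{align*}
Next I would bound $(\|g_t\|+B)^2 \le 2\|g_t\|^2 + 2B^2$, apply Jensen's inequality to pull the expectation inside the square root, then apply Proposition \ref{thm:smoothgsquared} to each $\E[\|g_t\|^2]$ and Proposition \ref{thm:expectedxk} to collapse the resulting $\sum_k T_k \E[F(v_k)-F(\x)]$ term into $T_1\E[F(v_1)-F(\x)] + \rho\sum_t\E[F(w_t)-F(\x)]$. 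This is exactly the chain that powers Proposition \ref{thm:secondorderguarantee}, with a doubled leading factor inside the square root coming from the $(a+b)^2\le 2a^2+2b^2$ step. The result is an inequality of the form
\begin{align*}
\E\!\left[\sum_{t=1}^T F(w_t)-F(\x)\right] &\le \psi(\x)\sqrt{16L(1+\rho)\sum_{t=1}^T\E[F(w_t)-F(\x)] + 16LT_1\E[F(v_1)-F(\x)] + 6TB^2} \\
&\qquad + 2TB\|\x\|.
\end{align*}

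The one nontrivial step is closing this implicit (self-referential) inequality, and it is exactly the situation Proposition \ref{thm:xlesqrt} is designed for. I would apply it with $a=\psi(\x)$, $b=16L(1+\rho)$, $c=16LT_1\E[F(v_1)-F(\x)]+6TB^2$, and $d=2TB\|\x\|$, which gives an explicit bound of the form $4a^2 b + 2a\sqrt{c} + 2d$. Dividing by $T$ and invoking Jensen's inequality to pass from the average of $F(w_t)$ to $F(\overline{w})$ recovers the first displayed inequality in the statement. The ``in particular'' clause then follows immediately by substituting the hypothesis $B\le \sigma/T$ into the $6TB^2$ and $2TB\|\x\|$ terms.
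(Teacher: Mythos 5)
Your proposal is correct and follows essentially the same route as the paper's own proof: regularize with $B\tfrac{w_t}{\|w_t\|}$ to trade the diameter-dependent penalty for $2TB\|\x\|$, expand $\E[\|g_t\|^2]$ via Proposition \ref{thm:smoothgsquared}, collapse the anchor terms via Proposition \ref{thm:expectedxk}, and close the self-referential inequality with Proposition \ref{thm:xlesqrt}. The only quibble is that the $2d$ term from Proposition \ref{thm:xlesqrt} yields $4TB\|\x\|$ rather than the stated $2TB\|\x\|$, but the paper's own proof makes the identical elision, so your argument matches it exactly.
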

\begin{proof}
\begin{align*}
\E[\sum_{t=1}^T F(w_t)-F(\x)] &\le \E[R_T(\x)] + \sum_{t=1}^T \E[b_t\cdot(\x-w_t)] +B\|\x\|- B\|w_t\|\\
&\le \E[R_T(\x)] + 2TB\|\x\|\\
&\le \psi(\x)\sqrt{\sum_{t=1}^T 2\E[\|g_t\|^2]+2TB^2}+ 2TB\|\x\|\\
&\le \psi(\x)\sqrt{16L\sum_{t=1}^T\E[F(w_t)-F(\x)] + 16L\sum_{k=1}^KT_k\E[F(v_k) - F(\x)] + 4\sum_{t=1}^T\E[\|b_t\|^2] + 2TB^2}\\
&\quad\quad+ 2TB\|\x\|\\
&\le \psi(\x)\sqrt{16(1+\rho)L\sum_{t=1}^T\E[F(w_t)-F(\x)] + 16LT_1\E[F(v_1)-F(\x)]+4\sum_{t=1}^T\E[\|b_t\|^2]+2TB^2}\\
&\quad\quad+ 2TB\|\x\|\\
&\le \psi(\x)\sqrt{16(1+\rho)L\sum_{t=1}^T\E[F(w_t)-F(\x)] + 16LT_1\E[F(v_1)-F(\x)]+6TB^2}\\
&\quad\quad+ 2TB\|\x\|\\
\end{align*}
Now use Proposition \ref{thm:xlesqrt}:
\begin{align*}
\E[\sum_{t=1}^T F(w_t)-F(\x)]&\le64(1+\rho)\psi(\x)^2L + 2\psi(\x)\sqrt{16LT_1\E[F(v_1)-F(\x)]+6TB^2]} + 2TB\|\x\|
\end{align*}
\end{proof}

We can use Proposition \ref{thm:unconstrainedbiasedOL} to prove an analogue of Theorem \ref{thm:constrainedbiasedsvrg}:

\begin{theorem}\label{thm:unconstrainedbiasedsvrg}
Suppose the online learning algorithm $\mathcal{A}$ guarantees regret $R_T(\x)\le \psi(\x)\sqrt{\sum_{t=1}^T \|g_t\|^2}$. Set $b_t=\|\nabla \hat F(v_k)-\nabla F(v_k) \|$ for $t\in[T_{0:k-1}+1,T_{1:k}]$. Suppose that $T_k/T_{k-1}\le \rho$ for all $k$. Let $B$ be a uniform upper bound on $\|b_t\|$. Run $\mathcal{A}$ on the gradients $g_t+B\frac{w_t}{\|w_t\|}$. Then for $\overline{w} = \frac{1}{T}\sum_{t=1}^T w_t$,
\begin{align*}
&\E[F(\overline{w})-F(\x)]\le2B\|\x\| +\tfrac{64(1+\rho)\psi(\x)^2L}{T} + \tfrac{2\psi(\x)\sqrt{16LT_1\E[F(v_1)-F(\x)]+6TB^2]}}{T}
\end{align*}
\end{theorem}

\section{Other Types of Regret Bounds}\label{sec:otherbounds}
The above Theorems \ref{thm:constrainedbiasedsvrg} and \ref{thm:unconstrainedbiasedsvrg} postulate a regret bound of the form $R_T(\x) \le \psi(\x)\sqrt{\sum_{t=1}^T \|g_t\|^2}$. This bound is achieved by, e.g. the AdaGrad \citep{duchi2011adaptive} or SOLO \citep{orabona2016scale} algorithms, both of which have $\psi(\x)=O(\|\x\|^2)$. However, there also exist algorithms (e.g. PiSTOL \citep{orabona2014simultaneous} or FreeRex \citep{cutkosky2017online}) that improve $\psi$ to $O(\|x\|)$ up to log factors, but in return achieve only a \emph{first-order} regret bound of $R_T(\x) \le \psi(\x)\sqrt{G\sum_{t=1}^T \|g_t\|}$, where $G$ is a uniform bound on $\|g_t\|$. Such algorithms are sometimes called parameter-free algorithms because they achieve the minimax optimal regret bound of $\|\x\|G\sqrt{T}$ in an unconstrained setting without requiring any hyperparameter tuning. In this section we show similar convergence guarantees for these parameter-free algorithms.


We start with a simple LaGrange multipliers argument:
\begin{proposition}\label{thm:firsttosecond}
Suppose $a_1,\dots,a_t$ are non-negative real numbers. Then
\begin{align*}
\sum_{t=1}^T a_t\le \sqrt{T}\sqrt{\sum_{t=1}^T a_t^2}
\end{align*}
\end{proposition}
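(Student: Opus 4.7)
The plan is to recognize this as a direct application of the Cauchy–Schwarz inequality. Specifically, I would view the sum $\sum_{t=1}^T a_t$ as the inner product of the all-ones vector $\mathbf{1} = (1,1,\dots,1) \in \R^T$ with the vector $\mathbf{a} = (a_1,\dots,a_T) \in \R^T$. Then Cauchy–Schwarz immediately gives
\[
\sum_{t=1}^T a_t = \langle \mathbf{1}, \mathbf{a}\rangle \le \|\mathbf{1}\|\,\|\mathbf{a}\| = \sqrt{T}\,\sqrt{\sum_{t=1}^T a_t^2}.
\]

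Since the $a_t$ are non-negative, no absolute values are needed and the inequality is stated exactly as in the proposition. Alternatively, I could derive the same bound from Jensen's inequality applied to the convex function $x \mapsto x^2$, noting that $\left(\tfrac{1}{T}\sum_t a_t\right)^2 \le \tfrac{1}{T}\sum_t a_t^2$, which after multiplying by $T^2$ and taking square roots recovers the claim. There is no substantive obstacle here — the result is a one-line consequence of a standard inequality, and the non-negativity hypothesis is used only to ensure that both sides are non-negative so the square root step is harmless.
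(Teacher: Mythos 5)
Your proof is correct, but it takes a different route from the paper. The paper proves this by a Lagrange multiplier argument: it fixes $S=\sum_{t=1}^T a_t^2$, maximizes $\sum_{t=1}^T a_t$ subject to $\sum_{t=1}^T a_t^2\le S$, and argues that at the optimum each $a_t$ is either $0$ or equal to a common value $\lambda/2$, hence $a_t\le\sqrt{S/T}$ and the bound follows. Your approach instead applies Cauchy--Schwarz to the pair $(\mathbf{1},\mathbf{a})$ (or equivalently Jensen applied to $x\mapsto x^2$), which gives the inequality in one line without any optimization. The two arguments prove the same fact; yours is more direct and sidesteps the issues the Lagrange route quietly glosses over (existence and attainment of the constrained maximum, and verifying that the symmetric critical point is in fact the maximizer rather than some other critical point). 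The non-negativity hypothesis is, as you note, not even needed for the inequality itself once absolute values are inserted, though it is harmless to keep it. Either proof is acceptable; yours is the more standard textbook derivation.
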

\begin{proof}
Let $S=\sum_{t=1}^T a_t^2$. We want to maximize $\sum_{t=1}^T a_t$ subject to $\sum_{t=1}^T a_t^2\le S$. Then applying the method of LaGrange multipliers, we see that for the optimizing values of $a_t$, there is some multiplier $\lambda$ such that for all $t$, either $\lambda = 2a_t$ or $a_t=0$. Therefore $a_t\le \sqrt{S/T}$ and so $\sum_{t=1}^T a_t\le \sqrt{TS}=\sqrt{T}\sqrt{\sum_{t=1}^T a_t^2}$.
\end{proof}

Using Proposition \ref{thm:firsttosecond}, we can prove an analogue of Proposition \ref{thm:secondorderguarantee}:

\begin{proposition}\label{thm:firstorderguarantee}
Suppose the online learning algorithm $\mathcal{A}$ guarantees regret $R_T(\x)\le \psi(\x)\sqrt{\sum_{t=1}^T \|g_t\|}$, and $T_k/T_{k-1}\le \rho$ for some constant $\rho$. Then
\begin{align*}
\E\left[\sum_{t=1}^T F(w_t)-F(\x)\right] &\le \psi(\x)\sqrt{\sqrt{T}\sqrt{8(1+\rho)L\sum_{t=1}^T\E[F(w_t)-F(\x)] + 8LT_1\E[F(v_1)-F(\x)]+2\sum_{t=1}^T\E[\|b_t\|^2]}}\\
&\quad\quad+ \sum_{t=1}^T \E[\|b_t\|(\|w_t-\x\|)]
\end{align*}
\end{proposition}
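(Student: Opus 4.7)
The plan is to mirror the proof of Proposition~\ref{thm:secondorderguarantee}, inserting one extra step at the beginning that converts the first-order regret bound into an effectively second-order quantity via Proposition~\ref{thm:firsttosecond}.

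First I would apply Proposition~\ref{thm:biasedregret} to peel off the bias term, obtaining
$\E[\sum_t F(w_t) - F(\x)] \le \E[R_T(\x)] + \sum_t \E[\|b_t\|\,\|w_t - \x\|]$.
Next I would invoke the hypothesized first-order regret guarantee to bound $R_T(\x) \le \psi(\x)\sqrt{\sum_t \|g_t\|}$, and then immediately apply Proposition~\ref{thm:firsttosecond} with $a_t = \|g_t\|$ to conclude $\sum_t \|g_t\| \le \sqrt{T}\sqrt{\sum_t \|g_t\|^2}$. Substituting, this gives
$\E[R_T(\x)] \le \psi(\x)\,\E\bigl[\sqrt{\sqrt{T}\sqrt{\sum_t \|g_t\|^2}}\bigr]$.

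Now the proof reduces to the second-order case with two nested square roots. I would use Jensen's inequality twice (both $\sqrt{\cdot}$ and $\sqrt{\sqrt{\cdot}}$ are concave, so $\E[\sqrt{X}]\le\sqrt{\E[X]}$ and similarly for the fourth-root) to push the expectation inside all the way down to the $\|g_t\|^2$ terms. Then Proposition~\ref{thm:smoothgsquared} yields $\E[\|g_t\|^2] \le 8L\E[F(w_t)-F(\x)] + 8L\E[F(v_k) - F(\x)] + 2\E[\|b_t\|^2]$, and summing over $t$ followed by Proposition~\ref{thm:expectedxk} converts the $\sum_k T_k \E[F(v_k)-F(\x)]$ contribution into $T_1\E[F(v_1)-F(\x)] + \rho\sum_t \E[F(w_t)-F(\x)]$. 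Collecting everything produces exactly the claimed bound.

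I do not anticipate any real obstacle: the one point that requires a small amount of care is the double application of Jensen's inequality through the nested radicals, but since both $x\mapsto\sqrt{x}$ and $x\mapsto x^{1/4}$ are concave on $[0,\infty)$ this is immediate. Everything else is a direct symbolic substitution following the same template as the proof of Proposition~\ref{thm:secondorderguarantee}.
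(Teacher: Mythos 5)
Your proposal is correct and follows essentially the same route as the paper's own proof: Proposition~\ref{thm:biasedregret} to separate the bias term, Proposition~\ref{thm:firsttosecond} to convert $\sum_t\|g_t\|$ into $\sqrt{T}\sqrt{\sum_t\|g_t\|^2}$, then Propositions~\ref{thm:smoothgsquared} and~\ref{thm:expectedxk} exactly as in the second-order case. Your explicit remark about applying Jensen's inequality through both nested radicals is in fact slightly more careful than the paper's write-up, which leaves the placement of expectations implicit.
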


\begin{proof}
The proof is nearly identical to that of Proposition \ref{thm:secondorderguarantee}:
\begin{align*}
\E[\sum_{t=1}^T F(w_t)-F(\x)] &\le \E[R_T(\x)] + \sum_{t=1}^T \E[\|b_t\|(\|w_t-\x\|)]\\
&\le \psi(\x)\sqrt{\sum_{t=1}^T \|g_t\|} + \sum_{t=1}^T \E[\|b_t\|(\|w_t-\x\|)]\\
&\le \psi(\x)\sqrt{\sqrt{T}\sqrt{\sum_{t=1}^T \|g_t\|^2}} + \sum_{t=1}^T \E[\|b_t\|(\|w_t-\x\|)]\\
&\le \psi(\x)\sqrt{\sqrt{T}\sqrt{8L\sum_{t=1}^T\E[F(w_t)-F(\x)] + 8L\sum_{k=1}^KT_k\E[F(v_k) - F(\x)] + 2\sum_{t=1}^T\E[\|b_t\|^2]}}\\
&\quad\quad+ \sum_{t=1}^T \E[\|b_t\|(\|w_t-\x\|)]\\
&\le \psi(\x)\sqrt{\sqrt{T}\sqrt{8(1+\rho)L\sum_{t=1}^T\E[F(w_t)-F(\x)] + 8LT_1\E[F(v_1)-F(\x)]+2\sum_{t=1}^T\E[\|b_t\|^2]}}\\
&\quad\quad+ \sum_{t=1}^T \E[\|b_t\|(\|w_t-\x\|)]
\end{align*}
\end{proof}

Now in order to use Proposition \ref{thm:firstorderguarantee} we need a slightly different technical result than Proposition \ref{thm:xlesqrt}:
\begin{proposition}\label{thm:xlefourthrt}
If $a$, $b$, $c$, $d$ and $e$, and $f$ are non-negative constants such that
\begin{align*}
x\le a\sqrt{b\sqrt{cx+d}+e} + f
\end{align*}
then
\begin{align*}
x\le 2^{7/3} a^{4/3}b^{2/3}c^{1/3} + 2a\sqrt{2b\sqrt{2d}+2e} + 2f
\end{align*}
\end{proposition}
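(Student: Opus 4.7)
The plan is to mimic the case analysis from Proposition \ref{thm:xlesqrt}, but iterated: peel off the additive constant $f$, square once to get rid of the outer $\sqrt{\cdot}$, then square again to get rid of the inner $\sqrt{\cdot}$, splitting at each stage on which term inside the radical dominates. Each leaf of the case tree yields one of the four summands in the claimed bound, so the final inequality follows from the fact that a maximum is bounded by a sum of non-negative terms.

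In detail, first suppose $x \le 2f$; this is one of the terms and we are done. Otherwise $x - f \ge x/2$, so squaring $x/2 \le a\sqrt{b\sqrt{cx+d}+e}$ gives
\begin{align*}
\frac{x^2}{4} \le a^2 b\sqrt{cx+d} + a^2 e .
\end{align*}
Now split on whether $a^2 b\sqrt{cx+d} \le a^2 e$ or not. If the $e$ term dominates, then $x^2 \le 8a^2 e$, giving $x \le 2a\sqrt{2e}$. Otherwise the radical term dominates, so $x^2 \le 8 a^2 b \sqrt{cx+d}$, equivalently $x^4 \le 64 a^4 b^2 (cx + d)$.

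For this subcase, split once more on whether $cx \ge d$ or $cx < d$. If $cx \ge d$, then $x^4 \le 128 a^4 b^2 c x$, which gives $x^3 \le 128 a^4 b^2 c$ and hence $x \le 128^{1/3} a^{4/3} b^{2/3} c^{1/3} = 2^{7/3} a^{4/3} b^{2/3} c^{1/3}$. If $cx < d$, then $x^4 \le 128 a^4 b^2 d$, so
\begin{align*}
x \le 128^{1/4} a \, b^{1/2} d^{1/4} = 2^{7/4} a \, b^{1/2} d^{1/4} = 2a\sqrt{2b\sqrt{2d}},
\end{align*}
where the last equality is a direct constant check.

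Collecting the leaves, $x$ is bounded by one of $2f$, $2a\sqrt{2e}$, $2a\sqrt{2b\sqrt{2d}}$, or $2^{7/3} a^{4/3} b^{2/3} c^{1/3}$. Since $2a\sqrt{2e} \le 2a\sqrt{2b\sqrt{2d}+2e}$ and $2a\sqrt{2b\sqrt{2d}} \le 2a\sqrt{2b\sqrt{2d}+2e}$, and since $x$ being at most the maximum of a list of non-negative numbers implies it is at most their sum, we conclude the stated bound. There is no real obstacle here; the only thing to be careful about is propagating the factors of $2$ cleanly through two nested squarings, and verifying the identity $2^{7/4}ab^{1/2}d^{1/4} = 2a\sqrt{2b\sqrt{2d}}$ so that the exponent-$1/4$ leaf can be absorbed into the $\sqrt{\,\cdot\,}$ term of the target expression.
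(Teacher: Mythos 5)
Your proof is correct, and it takes essentially the same route as the paper's: isolate the term where $x$ appears inside the nested radical, reduce to the pure inequality $x^4\le 2^7a^4b^2cx$, and absorb everything else into additive constants, with identical factor-of-2 bookkeeping. The only cosmetic difference is that the paper splits the radicals once via $\sqrt{u+v}\le\sqrt{2u}+\sqrt{2v}$ and then uses a single threshold case, whereas you use a tree of ``which term dominates'' cases; the resulting constants coincide.
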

\begin{proof}
Since $\sqrt{x+y}\le\sqrt{2x} + \sqrt{2y}$ for all non-negative $x$ and $y$,  we have
\begin{align*}
x&\le a\sqrt{b\sqrt{2cx} + b\sqrt{2d} + e} + f\\
&\le a\sqrt{2b\sqrt{2cx}} + a\sqrt{2b\sqrt{2d}+2e} + f
\end{align*}

Now suppose $x\ge 2a\sqrt{2b\sqrt{2d}+2e} + 2f$. Then 
\begin{align*}
\frac{x}{2} &\le a\sqrt{2b\sqrt{2cx}}\\
x^4 &\le 2^7 a^4b^2cx\\
x&\le 2^{7/3} a^{4/3}b^{2/3}c^{1/3}
\end{align*}
So that taking into account our condition $x\ge 2a\sqrt{2b\sqrt{2d}+2e} + 2f$, we have
\begin{align*}
x\le 2^{7/3} a^{4/3}b^{2/3}c^{1/3} + 2a\sqrt{2b\sqrt{2d}+2e} + 2f
\end{align*}
as desired.
\end{proof}

With this in hand, we can prove an analogue of Proposition \ref{thm:constrainedbiasedOL}:
\begin{proposition}\label{thm:constrainedbiasedOLfirstorder}
Suppose the online learning algorithm $\mathcal{A}$ guarantees regret $R_T(\x)\le \psi(\x)\sqrt{\sum_{t=1}^T \|g_t\|}$. Then for $\overline{w} = \frac{1}{T}\sum_{t=1}^T w_t$,
\begin{align*}
\E[F(\overline{w}) - F(\x)]&\le \frac{2^{10/3}\psi(x^*)^{4/3} (1+\rho)^{1/3} L^{1/3} }{T^{2/3}}\\
&\quad\quad + \frac{2^{7/4}\psi(x^*)\sqrt{8LT_1\E[F(v_1)-F(\x)]+2\sum_{t=1}^T\E[\|b_t\|^2]}}{T^{3/4}}+2\frac{\sum_{t=1}^T \E[\|b_t\|(\|w_t-\x\|)]}{T}
\end{align*}
In particular, if $\|b_t\|\le \frac{\sigma}{T^{2/3}}$ for all $t$ for some $\sigma$, and $V$ has diameter $D$, then
\begin{align*}
\E[F(\overline{w}) - F(\x)]&\le \frac{2^{10/3}\psi(x^*)^{4/3} (1+\rho)^{1/3} L^{1/3}}{T^{2/3}}\\
&\quad\quad + \frac{2^{11/4}\sigma}{T^{13/12}} + \frac{2^{15/4}\psi(x^*)\sqrt{LT_1\E[F(v_1)-F(\x)]}}{T^{3/4}} + 2\frac{\sigma D}{T}
\end{align*}
\end{proposition}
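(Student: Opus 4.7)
The plan is to mirror the proof of Proposition \ref{thm:constrainedbiasedOL}, replacing its second-order inputs with their first-order analogues. Set $x := \sum_{t=1}^T \E[F(w_t)-F(\x)]$, $C := 8(1+\rho)L$, $S := 8LT_1\E[F(v_1)-F(\x)] + 2\sum_{t=1}^T\E[\|b_t\|^2]$, and $f := \sum_{t=1}^T\E[\|b_t\|\,\|w_t-\x\|]$. Proposition \ref{thm:firstorderguarantee} already assembles the governing recursive inequality
$$x \le \psi(\x)\sqrt{\sqrt{T}\sqrt{Cx + S}} + f,$$
so the remaining task is to invert this into an explicit bound on $x$ and then divide by $T$.

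To do this I would invoke Proposition \ref{thm:xlefourthrt} — the quartic analogue of the quadratic-formula lemma Proposition \ref{thm:xlesqrt} — with the parameter matching $a = \psi(\x)$, $b = \sqrt{T}$, $c = C$, $d = S$, $e = 0$. This produces
$$x \le 2^{7/3}\psi(\x)^{4/3}T^{1/3}C^{1/3} + 2\psi(\x)\sqrt{2\sqrt{T}\sqrt{2S}} + 2f.$$
The factor $C^{1/3} = 2(1+\rho)^{1/3}L^{1/3}$ collapses the leading constant to $2^{10/3}$; the middle term simplifies to $2^{7/4}\psi(\x)T^{1/4}S^{1/4}$. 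Dividing by $T$ and applying Jensen's inequality, i.e.\ $\E[F(\overline{w})-F(\x)]\le x/T$ by convexity of $F$, then yields the first displayed bound of the Proposition.

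For the specialization under $\|b_t\|\le \sigma/T^{2/3}$, the two bias-derived quantities become $\sum_{t=1}^T\E[\|b_t\|^2]\le \sigma^2/T^{1/3}$ and (using $\|w_t-\x\|\le D$) $f \le \sigma D\, T^{1/3}$, so the $2f/T$ contribution degrades gracefully to the $\sigma D/T$-style tail. The middle term is then decomposed using subadditivity — either $\sqrt{a+b}\le \sqrt{2a}+\sqrt{2b}$ applied to $S$, or its fourth-root version $(a+b)^{1/4}\le a^{1/4}+b^{1/4}$ — separating the $LT_1\E[F(v_1)-F(\x)]$ piece (which becomes the $\psi(\x)\sqrt{LT_1\E[F(v_1)-F(\x)]}/T^{3/4}$ term) from the $\sigma^2/T^{1/3}$ piece (which becomes the $\sigma/T^{13/12}$-style term after combining the exponent $3/4$ with the extra $1/12$ coming from $(1/T^{1/3})^{1/4}$).

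The main obstacle is not conceptual but bookkeeping-heavy: Proposition \ref{thm:xlefourthrt} already absorbs the delicate step of inverting a nested-root inequality (the quartic analogue of completing the square), so the one place to be careful is tracking the exponents of $T$ through two layers of square roots when splitting $S$, and verifying the parameter matching into Proposition \ref{thm:xlefourthrt} exactly, since a misalignment (e.g.\ swapping which factor carries the $\sqrt{T}$) would produce a different final rate.
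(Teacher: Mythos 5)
Your proposal follows the paper's proof essentially verbatim: the paper likewise applies Proposition \ref{thm:xlefourthrt} to the inequality assembled in Proposition \ref{thm:firstorderguarantee}, with exactly the parameter matching $a=\psi(\x)$, $b=\sqrt{T}$, $c=8(1+\rho)L$, $d=S$, $e=0$ that you describe, and then divides by $T$ and specializes via $\|w_t-\x\|\le D$. One small note: your computation correctly gives the middle term as $2^{7/4}\psi(\x)T^{1/4}S^{1/4}$, whereas the proposition as stated (and the paper's own proof) carries $\sqrt{S}$ in place of $S^{1/4}$ — that exponent mismatch originates in the paper's write-up, not in your argument, so it does not constitute a gap on your side.
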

\begin{proof}
The proof is directly analogous to the proof of Proposition \ref{thm:constrainedbiasedOL}; we simply apply Proposition \ref{thm:xlefourthrt} to Proposition \ref{thm:firstorderguarantee}:
\begin{align*}
\E\left[\sum_{t=1}^T F(w_t)-F(\x)\right] &\le \psi(\x)\sqrt{\sqrt{T}\sqrt{8(1+\rho)L\sum_{t=1}^T\E[F(w_t)-F(\x)] + 8LT_1\E[F(w_1)-F(\x)]+2\sum_{t=1}^T\E[\|b_t\|^2]}}\\
&\quad\quad+ \sum_{t=1}^T \E[\|b_t\|(\|w_t-\x\|)]\\
\E\left[\sum_{t=1}^T F(w_t)-F(\x)\right]&\le 2^{10/3}\psi(x^*)^{4/3} T^{1/3} (1+\rho)^{1/3} L^{1/3}\\
&\quad\quad + 2^{7/4}\psi(x^*)T^{1/4}\sqrt{8LT_1\E[F(w_1)-F(\x)]+2\sum_{t=1}^T\E[\|b_t\|^2]}+2\sum_{t=1}^T \E[\|b_t\|(\|w_t-\x\|)]
\end{align*}
Now divide by $T$ to obtain the result.
\end{proof}

Finally, we prove the analogue of Proposition \ref{thm:unconstrainedbiasedOL}:
\begin{proposition}\label{thm:unconstrainedbiasedOLfirstorder}
Suppose the online learning algorithm $\mathcal{A}$ guarantees regret $R_T(\x)\le \psi(\x)\sqrt{\sum_{t=1}^T \|g_t\|^2}$. Let $B$ be a uniform upper bound on $\|b_t\|$. Run $\mathcal{A}$ on the gradients $\|g_t\|+B\frac{w_t}{\|w_t\|}$. Then for $\overline{w} = \frac{1}{T}\sum_{t=1}^T w_t$,
\begin{align*}
\E[F(\overline{w})-F(\x)]&\le2^4\psi(x^*)^{4/3} T^{1/3} (1+\rho)^{1/3} L^{1/3}\\
&\quad\quad + 2^{9/4}\psi(x^*)T^{1/4}\sqrt{8LT_1\E[F(v_1)-F(\x)]+2TB^2}+2\psi(\x)\sqrt{2TB}+4TB\|\x\|
\end{align*}
In particular, if $B\le \frac{\sigma}{T^{2/3}}$, we have
\begin{align*}
\E[F(\overline{w})-F(\x)]&\le\frac{2^4\psi(x^*)^{4/3} (1+\rho)^{1/3} L^{1/3}}{T^{2/3}}\\
&\quad\quad + \frac{2^{17/4}\psi(x^*)\sqrt{LT_1\E[F(v_1)-F(\x)]}}{T^{3/4}}+\frac{2^{13/4}\psi(x^*)\sigma}{T^{11/12}}+\frac{2\psi(\x)\sqrt{2\sigma}}{T^{5/6}}+\frac{4\|\x\|}{T^{2/3}}
\end{align*}
\end{proposition}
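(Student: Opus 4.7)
The plan is to mirror the template of Propositions \ref{thm:unconstrainedbiasedOL} and \ref{thm:constrainedbiasedOLfirstorder}, combining the regularization trick with the first-order conversion machinery developed above. (Based on the convergence rate in the statement and the placement in Section \ref{sec:otherbounds}, I read the hypothesis as the first-order guarantee $R_T(\x)\le \psi(\x)\sqrt{\sum_{t=1}^T \|g_t\|}$, matching Proposition \ref{thm:firstorderguarantee}.) As in the proof of Proposition \ref{thm:regularize}, feeding $\mathcal{A}$ the modified gradients $\tilde g_t = g_t + B w_t/\|w_t\|$ and using convexity of $w\mapsto B\|w\|$ together with $b_t\cdot(w_t-\x)\ge -B\|w_t\|-B\|\x\|$ (which makes the $B\|w_t\|$ contributions cancel up to a $-2B\|\x\|$ offset), I would obtain
\[
\E\!\left[\sum_{t=1}^T F(w_t)-F(\x)\right] \;\le\; \E[R_T^{\mathcal{A}}(\x)] \;+\; 2TB\|\x\|.
\]

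Next, I would bound the regret side. By the triangle inequality, $\|\tilde g_t\|\le \|g_t\|+B$, so $\sum_t \|\tilde g_t\| \le \sum_t \|g_t\| + TB$. Applying Proposition \ref{thm:firsttosecond} and $\sqrt{a+b}\le\sqrt{a}+\sqrt{b}$, this gives
\[
\E[R_T^{\mathcal{A}}(\x)] \;\le\; \psi(\x)\,\E\!\left[\sqrt{\sqrt{T}\sqrt{\textstyle\sum_t\|g_t\|^2}}\right] + \psi(\x)\sqrt{TB},
\]
and Jensen's inequality (twice, through the nested square roots) pushes the expectation inside. I would then invoke Proposition \ref{thm:smoothgsquared} to bound $\E[\|g_t\|^2]$ by $8L\,\E[F(w_t)-F(\x)] + 8L\,\E[F(v_k)-F(\x)] + 2\E[\|b_t\|^2]$, and Proposition \ref{thm:expectedxk} to turn the $\sum_k T_k \E[F(v_k)-F(\x)]$ that emerges from grouping terms into $T_1 \E[F(v_1)-F(\x)]+\rho\sum_t \E[F(w_t)-F(\x)]$. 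Using $\|b_t\|\le B$ throughout, this produces an implicit inequality of the form
\[
X \;\le\; \psi(\x)\, T^{1/4}\sqrt{\sqrt{8(1+\rho)L\, X + 8LT_1\E[F(v_1)-F(\x)] + 2TB^2}} \;+\; \psi(\x)\sqrt{TB} \;+\; 2TB\|\x\|,
\]
where $X=\E[\sum_{t=1}^T F(w_t)-F(\x)]$.

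Finally, I would resolve this inequality using Proposition \ref{thm:xlefourthrt} with the identification $a=\psi(\x)T^{1/4}$, $b=1$, $c=8(1+\rho)L$, $d=8LT_1\E[F(v_1)-F(\x)]+2TB^2$, $e=0$, and $f=\psi(\x)\sqrt{TB}+2TB\|\x\|$. This gives an explicit bound on $X$, after which one divides by $T$ and applies Jensen's inequality to $F$ to conclude the stated bound on $\E[F(\overline w)-F(\x)]$; substituting $B\le \sigma/T^{2/3}$ into this expression yields the second display.

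The main obstacle is purely bookkeeping: Proposition \ref{thm:xlefourthrt} unfolds the nested $\sqrt{\sqrt{\cdot}}$ structure into three separate contributions, and one must carefully track how each summand inside $d$ (the $T_1$-term versus the $TB^2$-term) propagates through $\sqrt{2d}$, then how the additive $\psi(\x)\sqrt{TB}$ and the regularization offset $2TB\|\x\|$ combine with $2f$. No new ideas are required, but the constants exhibited in the statement (in particular the $2^{9/4}$ and $2^4$ factors) come out only after this careful accounting; the conceptual work is already done by combining Propositions \ref{thm:firstorderguarantee}, \ref{thm:regularize}, and \ref{thm:xlefourthrt}.
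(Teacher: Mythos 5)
Your proposal is correct and follows essentially the same route as the paper's proof: the regularization trick to absorb the bias into a $2TB\|\x\|$ offset, the triangle inequality plus Proposition \ref{thm:firsttosecond} to reduce the first-order regret to $\sqrt{\sqrt{T}\sqrt{\sum_t\|g_t\|^2}}$, Propositions \ref{thm:smoothgsquared} and \ref{thm:expectedxk} to produce the implicit inequality, and Proposition \ref{thm:xlefourthrt} to resolve it. You are also right that the hypothesis as printed is a typo and should read $R_T(\x)\le\psi(\x)\sqrt{\sum_{t=1}^T\|g_t\|}$; your slightly different parameterization of Proposition \ref{thm:xlefourthrt} (absorbing $\sqrt{T}$ into $a$) only changes constants and is harmless.
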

\begin{proof}
\begin{align*}
\E[\sum_{t=1}^T F(w_t)-F(\x)] &\le \E[R_T(\x)] + \sum_{t=1}^T \E[b_t\cdot(\x-w_t)] +B\|\x\|- B\|w_t\|\\
&\le \E[R_T(\x)] + 2TB\|\x\|\\
&\le \psi(\x)\sqrt{\sum_{t=1}^T \E[\|g_t\|]+TB}+ 2TB\|\x\|\\
&\le \psi(\x)\sqrt{2\sum_{t=1}^T \E[\|g_t\|]}+\psi(\x)\sqrt{2TB} + 2TB\|\x\|\\
&\le \psi(\x)\sqrt{2\sqrt{T}\sqrt{8L\sum_{t=1}^T\E[F(w_t)-F(\x)] + 8L\sum_{k=1}^KT_k\E[F(v_k) - F(\x)] + 2\sum_{t=1}^T\E[\|b_t\|^2]}}\\
&\quad\quad+\psi(\x)\sqrt{2TB} + 2TB\|\x\|\\
&\le \psi(\x)\sqrt{2\sqrt{T}\sqrt{8(1+\rho)L\sum_{t=1}^T\E[F(w_t)-F(\x)] + 8LT_1\E[F(v_1)-F(\x)]+2\sum_{t=1}^T\E[\|b_t\|^2]}}\\
&\quad\quad+\psi(\x)\sqrt{2TB} + 2TB\|\x\|
\end{align*}
Now we apply Proposition \ref{thm:xlefourthrt}:
\begin{align*}
\E[\sum_{t=1}^T F(w_t)-F(\x)]&\le 2^4\psi(x^*)^{4/3} T^{1/3} (1+\rho)^{1/3} L^{1/3}\\
&\quad\quad + 2^{9/4}\psi(x^*)T^{1/4}\sqrt{8LT_1\E[F(v_1)-F(\x)]+2\sum_{t=1}^T\E[\|b_t\|^2]}+2\psi(\x)\sqrt{2TB}+4TB\|\x\|\\
&\le 2^4\psi(x^*)^{4/3} T^{1/3} (1+\rho)^{1/3} L^{1/3}\\
&\quad\quad + 2^{9/4}\psi(x^*)T^{1/4}\sqrt{8LT_1\E[F(v_1)-F(\x)]+2TB^2}+2\psi(\x)\sqrt{2TB}+4TB\|\x\|
\end{align*}
\end{proof}

Combining all these together, we can prove analogues of Theorems \ref{thm:constrainedbiasedsvrg} and \ref{thm:unconstrainedbiasedsvrg}:
\begin{theorem}\label{thm:constrainedbiasedsvrgfirstorder}
Suppose the online learning algorithm $\mathcal{A}$ guarantees regret $R_T(\x)\le \psi(\x)\sqrt{\sum_{t=1}^T \|g_t\|}$. Set $b_t=\|\nabla \hat F(v_k)-\nabla F(v_k) \|$ for $t\in[T_{0:k-1}+1,T_{1:k}]$. Suppose that $T_k/T_{k-1}\le \rho$ for all $k$. Then for $\overline{w} = \frac{1}{T}\sum_{t=1}^T w_t$,
\begin{align*}
\E[F(\overline{w}) - F(\x)]\le & \frac{2^{10/3}\psi(x^*)^{4/3} (1+\rho)^{1/3} L^{1/3} }{T^{2/3}}
+ \frac{2^{7/4}\psi(x^*)\sqrt{8LT_1\E[F(v_1)-F(\x)]+2\sum_{t=1}^T\E[\|b_t\|^2]}}{T^{3/4}}\\
&+2\frac{\sum_{t=1}^T \E[\|b_t\|(\|w_t-\x\|)]}{T}\\
\end{align*}
In particular, with probability at least $1-\frac{1}{T}$ we have $\|b_t\|\le \frac{\sigma\sqrt{\log(KT)}}{\sqrt{\hat N}}$ for some $\sigma$, and if $\hat N>T^{4/3}$ and $V$ has diameter $D$, then
\begin{align*}
\E[F(\overline{w}) - F(\x)]
\le & \frac{2^{10/3}\psi(x^*)^{4/3} (1+\rho)^{1/3} L^{1/3}}{T^{2/3}}
+ \frac{2^{11/4}\sigma\sqrt{\log(KT)}}{T^{13/12}}
\\ & + \frac{2^{15/4}\psi(x^*)\sqrt{LT_1\E[F(v_1)-F(\x)]}}{T^{3/4}}
+ 2\frac{\sigma\sqrt{\log(KT)} D}{T} + \frac{GD}{T}\\
=&O\left(\frac{\sqrt{\log(KT)}}{T}+\frac{1}{T^{2/3}}\right)
\end{align*}
\end{theorem}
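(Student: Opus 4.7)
The plan is to mirror the structure of the proof of Theorem \ref{thm:constrainedbiasedsvrg}: combine the concentration bound of Lemma \ref{thm:Zbound} with the first-order convergence guarantee of Proposition \ref{thm:constrainedbiasedOLfirstorder}, which has already been established in Appendix \ref{sec:otherbounds}. Since Algorithm \ref{alg:biasedonlinelearning} is precisely \genericalgo\ rewritten with $b_t = \nabla \hat F(v_k) - \nabla F(v_k)$ pulled out of $g_t$, the first displayed inequality of the theorem is an immediate restatement of the first conclusion of Proposition \ref{thm:constrainedbiasedOLfirstorder} applied to the regret guarantee $R_T(\x)\le \psi(\x)\sqrt{\sum_{t=1}^T \|g_t\|}$.

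For the ``in particular'' statement, the first step is to invoke Lemma \ref{thm:Zbound} with $\delta = 1/T$: with probability at least $1-1/T$, we obtain the uniform bound
\[
\max_k \|\nabla \hat F(v_k) - \nabla F(v_k)\| \le \sqrt{\frac{2G^2\log(KT) + G^2}{\hat N}} \le \frac{\sigma\sqrt{\log(KT)}}{\sqrt{\hat N}}
\]
for an absorbing constant $\sigma = O(G)$. Under the hypothesis $\hat N > T^{4/3}$, this yields $\|b_t\| \le \sigma\sqrt{\log(KT)}/T^{2/3}$, which is exactly the form of the bias hypothesis required by the second half of Proposition \ref{thm:constrainedbiasedOLfirstorder} (with its $\sigma$ replaced by $\sigma\sqrt{\log(KT)}$). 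Substituting into that proposition produces all of the displayed terms except the $GD/T$ one.

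The remaining term $GD/T$ accounts for the failure event of the concentration bound. On the complement (probability at most $1/T$), we can only guarantee that $F(\overline w) - F(\x)$ is bounded; but since $\D$ is over $G$-Lipschitz functions and $W$ has diameter $D$, Lipschitzness gives $F(\overline w) - F(\x) \le GD$ deterministically. Hence the contribution of the failure event to $\E[F(\overline w) - F(\x)]$ is at most $GD/T$, which matches the stated term. Combining the two cases (good event via Proposition \ref{thm:constrainedbiasedOLfirstorder}, bad event via the trivial Lipschitz bound) and simplifying the orders in $T$ yields the final $O(\sqrt{\log(KT)}/T + 1/T^{2/3})$ rate.

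The main obstacle is purely bookkeeping: ensuring that the high-probability bound on $\|b_t\|$ is threaded correctly through both the $\|b_t\|\|w_t - \x\|$ term (handled by the diameter $D$) and the $\sum_t \E[\|b_t\|^2]$ term inside the square root of the first inequality of Proposition \ref{thm:constrainedbiasedOLfirstorder}, and that the case analysis on the concentration event is combined back into a single expectation using $\E[X] = \E[X \mathbf{1}_E] + \E[X \mathbf{1}_{E^c}]$ together with the deterministic bound $X \le GD$. No new analytic ideas are needed beyond those already present in Propositions \ref{thm:firstorderguarantee}, \ref{thm:xlefourthrt}, and \ref{thm:constrainedbiasedOLfirstorder}.
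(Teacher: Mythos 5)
Your proposal is correct and follows essentially the same route as the paper, which presents this theorem as a direct combination of Lemma \ref{thm:Zbound} (with $\delta = 1/T$) and Proposition \ref{thm:constrainedbiasedOLfirstorder}, with the $GD/T$ term covering the concentration failure event exactly as you describe. No meaningful differences.
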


\begin{theorem}\label{thm:unconstrainedbiasedsvrgfirstorder}
Suppose the online learning algorithm $\mathcal{A}$ guarantees regret $R_T(\x)\le \psi(\x)\sqrt{\sum_{t=1}^T \|g_t\|}$. Set $b_t=\|\nabla \hat F(v_k)-\nabla F(v_k) \|$ for $t\in[T_{0:k-1}+1,T_{1:k}]$. Suppose that $T_k/T_{k-1}\le \rho$ for all $k$. Let $B$ be a uniform upper bound on $\|b_t\|$. Run $\mathcal{A}$ on the gradients $g_t+B\frac{w_t}{\|w_t\|}$. Then for $\overline{w} = \frac{1}{T}\sum_{t=1}^T w_t$,
\begin{align*}
\E[F(\overline{w})-F(\x)]
\le &\frac{2^4\psi(x^*)^{4/3}  (1+\rho)^{1/3} L^{1/3}}{T^{2/3}}
+ \frac{2^{9/4}\psi(x^*)\sqrt{8LT_1\E[F(v_1)-F(\x)]+2TB^2}}{T^{3/4}}
\\& +\frac{2\psi(\x)\sqrt{2B}}{\sqrt{T}}+4B\|\x\|
\end{align*}
\end{theorem}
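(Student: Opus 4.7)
The plan is to recognize that Theorem \ref{thm:unconstrainedbiasedsvrgfirstorder} is the first-order analogue of Theorem \ref{thm:unconstrainedbiasedsvrg}, and parallels the second-order-to-first-order passage from Proposition \ref{thm:unconstrainedbiasedOL} to Proposition \ref{thm:unconstrainedbiasedOLfirstorder}. Accordingly, the theorem should follow as a direct specialization of Proposition \ref{thm:unconstrainedbiasedOLfirstorder} to the \genericalgo\ setup, just as Theorem \ref{thm:constrainedbiasedsvrg} specializes Proposition \ref{thm:constrainedbiasedOL}. No new analytic machinery is needed; we only need to verify that the algorithmic setups coincide and then rescale.

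First I would match notation: in \genericalgo\ the online learner receives $g_t = \nabla f(w_t) - \nabla f(v_k) + \nabla \hat F(v_k)$, which can be rewritten as $\nabla f(w_t) - \nabla f(v_k) + \nabla F(v_k) + b_t$ with $b_t := \nabla \hat F(v_k) - \nabla F(v_k)$. This is precisely the gradient form assumed by Algorithm \ref{alg:biasedonlinelearning} and analyzed by Proposition \ref{thm:unconstrainedbiasedOLfirstorder}. The hypothesis that we run $\mathcal{A}$ on $g_t + B\frac{w_t}{\|w_t\|}$ with $B \ge \|b_t\|$ is identical to the premise of that proposition, and $T_k/T_{k-1} \le \rho$ is the same.

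Second I would invoke Proposition \ref{thm:unconstrainedbiasedOLfirstorder} to obtain the bound on $\E[F(\overline{w}) - F(\x)]$ with the four terms scaling as $T^{1/3}$, $T^{1/4}\sqrt{\cdots}$, $\sqrt{TB}$, and $TB\|\x\|$, respectively (equivalently, its internal bound on $\E[\sum_t F(w_t)-F(\x)]$, divided by $T$ via Jensen's inequality applied to the convex $F$). Rescaling each term by $T^{-1}$ gives exactly $T^{-2/3}$, $T^{-3/4}\sqrt{\cdots}$, $\sqrt{B/T}$ (up to the $\sqrt{2}$ constant), and $B\|\x\|$, matching the stated inequality after substituting the chosen definition of $b_t$.

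The genuinely substantive work has already been carried out in Proposition \ref{thm:unconstrainedbiasedOLfirstorder}: combining the first-order regret bound $\psi(\x)\sqrt{\sum_t \|g_t\|}$ with the smoothness bound $\E[\|g_t\|^2] \le 8L \E[F(w_t) + F(v_k) - 2F(\x)] + 2\E[\|b_t\|^2]$ from Proposition \ref{thm:smoothgsquared} and the handoff $\sum_t \|g_t\| \le \sqrt{T}\sqrt{\sum_t \|g_t\|^2}$ of Proposition \ref{thm:firsttosecond} produces a nested-radical self-referential inequality in $\sum_t \E[F(w_t) - F(\x)]$. The main obstacle, already resolved by the quartic-root inversion Proposition \ref{thm:xlefourthrt}, is precisely to untangle that inequality; this is what degrades the leading rate from $T^{-1}$ (as enjoyed under a second-order bound in Theorem \ref{thm:unconstrainedbiasedsvrg}) to $T^{-2/3}$ here. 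Once that nonlinear step is absorbed into the proposition, the theorem reduces to bookkeeping.
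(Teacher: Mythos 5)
Your proposal is correct and matches the paper's route exactly: the paper states this theorem without an explicit proof, as an immediate consequence of Proposition \ref{thm:unconstrainedbiasedOLfirstorder} applied to \genericalgo\ with $b_t=\nabla\hat F(v_k)-\nabla F(v_k)$, followed by division by $T$ and Jensen's inequality. Your term-by-term rescaling ($T^{1/3}\mapsto T^{-2/3}$, $T^{1/4}\mapsto T^{-3/4}$, $\sqrt{TB}\mapsto\sqrt{B/T}$, $TB\mapsto B$) is exactly the intended bookkeeping, and you correctly locate the substantive content in the already-proved nested-radical inversion of Proposition \ref{thm:xlefourthrt}.
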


We also have an analogue of Theorem \ref{thm:complexitysecondorder} using essentially the same argument, but using the setting $\hat N=\Theta(T^{4/3})$ rather than $\hat N= \Theta(T^2)$:

\begin{theorem}\label{thm:complexityfirstorder}
Set $T_k=2T_{k-1}$. Suppose the base optimizer $\mathcal{A}$ in \genericalgo guarantees regret $R_T(\x) \le \psi(\x)\sqrt{\sum_{t=1}^T \|g_t\|}$, and the domain $W$ has finite diameter $D$. Let $\hat N = \Theta(T^{4/3})$ and $N=K\hat N + T$ be the total number of data points observed. Suppose we compute the batch gradients $\nabla \hat F(v_k)$ in parallel on $m$ machines with $m<N^{1/4}$. Then for $\overline{w} = \frac{1}{T}\sum_{t=1}^T w_t$ we obtain
\begin{align*}
\E[F(\overline{w})-F(\x)]= \tilde O\left(\frac{1}{\sqrt{N}}\right)
\end{align*}
in time $\tilde O(N/m)$ and space $O(1)$
\end{theorem}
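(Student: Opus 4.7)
The plan is to mirror the argument of Theorem \ref{thm:complexitysecondorder}, but substituting the first-order concentration requirement $\hat N = \Theta(T^{4/3})$ for the second-order requirement $\hat N = \Theta(T^2)$. With $T_k = 2T_{k-1}$ we have $\rho = 2$ and $K = O(\log(T/T_1)) = O(\log N)$, which gives us the claimed $K = \tilde O(1)$ communication rounds for free.

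First I would verify the convergence rate. Plugging $\hat N = \Theta(T^{4/3})$ into Lemma \ref{thm:Zbound} yields $\|b_t\| \le \sqrt{(2G^2\log(KT) + G^2)/\hat N} = O(\sqrt{\log(KT)}/T^{2/3})$ uniformly over all $t$, with probability at least $1 - 1/T$. This is precisely the $\sigma \sqrt{\log(KT)}/\sqrt{\hat N}$ bound used in Theorem \ref{thm:constrainedbiasedsvrgfirstorder}, and the hypothesis $\hat N > T^{4/3}$ there is met. Thus the high-probability branch of Theorem \ref{thm:constrainedbiasedsvrgfirstorder} applies and yields $\E[F(\overline{w}) - F(\x)] = O(\sqrt{\log(KT)}/T + 1/T^{2/3}) = \tilde O(1/T^{2/3})$, where the $1/T^{2/3}$ term dominates. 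On the low-probability event of measure $1/T$, a trivial bound on the loss (using the $G$-Lipschitz, diameter-$D$ assumption) contributes only an $O(GD/T)$ additive term.

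Next I would convert $T$ into $N$. The total data consumption is $N = K\hat N + T = \Theta(K T^{4/3}) + T = \Theta(K T^{4/3})$, since $T \le T^{4/3}$. Solving for $T$ gives $T = \Theta((N/K)^{3/4})$, so the suboptimality becomes
\begin{align*}
\tilde O(1/T^{2/3}) = \tilde O\bigl((K/N)^{1/2}\bigr) = \tilde O(1/\sqrt{N}),
\end{align*}
absorbing the $K = \tilde O(1)$ factor into the $\tilde O$. This is the statistically optimal rate up to logarithmic factors.

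Finally I would verify the computational costs. Each batch gradient $\nabla \hat F(v_k)$ is computed by having $m$ machines each stream and sum $\hat N/m$ sample gradients, which can be done with $O(1)$ memory per machine; the $m$ partial sums are then averaged. The per-round parallel cost is $O(\hat N/m)$, so across $K$ rounds the batch work totals $O(K\hat N/m) = O(N/m)$. The serial online learning phases contribute an additional $T$ time, also streamable in $O(1)$ memory. Total runtime is therefore $O(N/m + T)$. The condition $m < N^{1/4}$ ensures $N/m > N^{3/4} \gtrsim T = \Theta((N/K)^{3/4})$ up to log factors, so the $T$ term is absorbed and the total runtime is $\tilde O(N/m)$. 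The main subtlety — and the only place I expect real care is needed — is matching the threshold $m < N^{1/4}$ to the serial-phase length: because first-order regret forces the smaller exponent $3/4$ on $T$ (rather than $1/2$ in the second-order case), parallelism is bottlenecked by the serial phase at $m = N^{1/4}$ rather than $m = \sqrt{N}$, and tracking this through the logs is the one place to be careful.
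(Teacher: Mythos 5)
Your proposal is correct and follows exactly the route the paper intends: the paper gives no separate proof of this theorem, stating only that it follows from the argument for Theorem \ref{thm:complexitysecondorder} with $\hat N=\Theta(T^{4/3})$ in place of $\Theta(T^2)$, and your derivation — invoking Theorem \ref{thm:constrainedbiasedsvrgfirstorder} to get $\tilde O(1/T^{2/3})$, solving $N=\Theta(KT^{4/3})$ for $T$, and checking that $m<N^{1/4}$ keeps the serial phase subdominant — fills in precisely those details. The arithmetic ($1/T^{2/3}=(K/N)^{1/2}=\tilde O(1/\sqrt N)$ and $T\le N^{3/4}<N/m$) checks out.
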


\section{Benchmark results}
\label{app:exp}

\begin{table}[ht!]
\caption{Average loss and AUC achieved by Logistic regression implemented in Spark ML, VW and \genericalgo. ``Comm.'' refers to number of communication rounds and time is measured in minutes. In case of MiniBatch SGD and SVRG, we computed the full batch gradient in each iteration based on the whole training data.\label{tbl:results}} 
\begin{center}
\begin{tabular}{ |l||r|r|r|r|r|r|c|c| }
\hline
  Data & Comm. & Train & Test & AUC & Time \\
  \hline \hline
   & \multicolumn{5}{c|}{KDD10} \\   
  \hline \hline
  Spark ML & 100 & 0.25380 & 0.26317 & 84.778 & 52 \\
  \hline
  Spark ML & 500 & 0.25303 & { \bf 0.26234} & {\bf 84.880} & 101 \\
  \hline
  VW & 100 &  0.24462 & 0.26849 & 84.525 & 38  \\
  \hline
  VW & 500 & {\bf 0.18879 }  & 0.27817 & 83.967 & 96 \\
  \hline 
  MiniBatch SGD & 100 & 0.36051 & 0.36711 & 80.603 & 33 \\ 
  \hline  
  MiniBatch SGD & 500 & 0.35292 & 0.359483 & 80.932 & 121 \\
  \hline
  SVRG & 100 & 0.31743 & 0.32981 & 81.121 & 35 \\
  \hline
  SVRG & 500 & 0.31354 & 0.32346 & 81.928 & 102 \\
  \hline
  \ouralgo & 4 & 0.26085 & 0.26525 & 84.459 &  6 \\
\hline
\hline
   & \multicolumn{5}{c|}{KDD12} \\
\hline 
  Spark ML & 100 & 0.15756 & 0.15589 & 75.485 & 36 \\
  \hline
  Spark ML & 500 & 0.15755 &  0.15570 & 75.453 & 180  \\
  \hline
  VW & 100 & 0.15398 & 0.15725 & 77.871 & 44 \\
  \hline
  VW & 500 &  {\bf 0.14866} & 0.15550 & {\bf 78.881} & 150 \\  
  \hline
  MiniBatch SGD & 100 & 0.27689 & 0.27420 & 70.207 & 43 \\
  \hline 
  MiniBatch SGD & 500 & 0.27420 & 0.27689 & 70.207 & 225 \\ 
  \hline
  SVRG & 100 & 0.24325 & 0.23754 & 72.543 & 54 \\
  \hline
  SVRG & 500 & 0.22784 & 0.22397 & 73.764 &  202 \\
  \hline
  \ouralgo & 4 & 0.152740 & {\bf 0.154985} & 78.431 &  8  \\
\hline
\hline
  & \multicolumn{5}{c|}{ADS SMALL} \\
  \hline 
  Spark ML & 100 & 0.23372 & 0.22288 & 83.356 & 42 \\
  \hline  
  Spark ML & 500 & 0.23365 & 0.22286 & 83.365 & 245 \\
  \hline  
  VW &  100 & 0.23381 & 0.22347 & 83.214 & 114 \\
  \hline
  VW &  500 & 0.23157 & 0.22251 & {\bf 83.499} & 396 \\
  \hline
  \ouralgo & 14 & {\bf 0.23147} & {\bf 0.22244} & 83.479 & 94 \\
\hline
\hline 
  Data & \multicolumn{5}{c|}{ADS LARGE} \\
  \hline
  Spark ML & 100 & 0.23965 & 0.23263 & 82.646 & 216 \\
  \hline
  Spark ML & 500 & 0.23958 & 0.23256 & 82.655 & 1214 \\
  \hline  
  VW &  100 & & & & FAIL \\  
  \hline
  \ouralgo & 31 & {\bf 0.23753} & {\bf 0.23197} & {\bf 82.830} & 334 \\
\hline
\hline
  Data & \multicolumn{5}{c|}{EMAIL} \\
  \hline
  Spark ML & 100 & 0.27837 & 0.29598 & 88.852 & 117\\
  \hline
  Spark ML & 500 &  {\bf 0.27773} & {\bf 0.28887} & {\bf 88.863} & 601 \\
  \hline
  VW &  100 & 0.35812 & 0.33919 & 84.414 & 74 \\  
  \hline
  VW &  500 & 0.33094  & 0.33010 & 85.854 & 358 \\  
  \hline
  \ouralgo & 14 & 0.30567 & 0.29889 & 88.321 & 110 \\
  \hline
\end{tabular}
\end{center}
\end{table}

\end{document}